
\documentclass{article}

\usepackage{microtype}
\usepackage{graphicx}
\usepackage{subfigure}
\usepackage{booktabs} 

\usepackage{hyperref}


\newcommand{\mypar}[1]{\vspace{-0.7mm}\paragraph{{\bf #1}}}


\usepackage[accepted]{icml2025}

\usepackage{amsmath}
\usepackage{amssymb}
\usepackage{mathtools}
\usepackage{amsthm, subcaption}

\usepackage{amsmath,amsfonts,bm}









\def\eqref#1{equation~\ref{#1}}
\def\Eqref#1{Equation~\ref{#1}}








\def\1{\bm{1}}










\DeclareMathAlphabet{\mathsfit}{\encodingdefault}{\sfdefault}{m}{sl}
\SetMathAlphabet{\mathsfit}{bold}{\encodingdefault}{\sfdefault}{bx}{n}











\newcommand{\E}{\mathbb{E}}

\newcommand{\R}{\mathbb{R}}

\newcommand{\KL}{D_{\mathrm{KL}}}




\usepackage{color,xcolor}
\usepackage{epsfig}
\usepackage{graphicx}

\usepackage{adjustbox}
\usepackage{array}
\usepackage{booktabs}
\usepackage{colortbl}
\usepackage{wrapfig}
\usepackage{hhline}
\usepackage{multirow}

\usepackage[utf8]{inputenc} 
\usepackage[T1]{fontenc}    
\usepackage{amsmath,amsfonts,amssymb}
\usepackage{bm}
\usepackage{nicefrac}
\usepackage{microtype}
\usepackage{mathtools}


\usepackage{changepage}
\usepackage{extramarks}
\usepackage{fancyhdr}
\usepackage{lastpage}
\usepackage{setspace}
\usepackage{soul}
\usepackage{xspace}

\definecolor{cvprblue}{rgb}{0.21,0.49,0.74}

\usepackage{url}

\usepackage{enumerate}
\usepackage{enumitem}  
\usepackage{xcolor}
\usepackage{makecell}

\usepackage{pifont} 

\usepackage{amsthm}
\urlstyle{same}
\usepackage{float}

\usepackage[bottom]{footmisc}



\newcolumntype{L}[1]{>{\raggedright\let\newline\\\arraybackslash\hspace{0pt}}m{#1}}
\newcolumntype{C}[1]{>{\centering\let\newline\\\arraybackslash\hspace{0pt}}m{#1}}
\newcolumntype{R}[1]{>{\raggedleft\let\newline\\\arraybackslash\hspace{0pt}}m{#1}}


\newcommand{\ignore}[1]{}

\DeclareMathAlphabet{\mathbfit}{OML}{cmm}{b}{it}

\makeatletter
\DeclareRobustCommand\onedot{\futurelet\@let@token\@onedot}
\def\@onedot{\ifx\@let@token.\else.\null\fi\xspace}

\makeatother

\definecolor{MyDarkBlue}{rgb}{0,0.08,1}
\definecolor{MyAqua}{rgb}{0,0.7,0.7}
\definecolor{MyDarkGreen}{rgb}{0.02,0.6,0.02}
\definecolor{MyDarkRed}{rgb}{0.8,0.02,0.02}
\definecolor{MyDarkOrange}{rgb}{0.40,0.2,0.02}
\definecolor{MyPurple}{RGB}{111,0,255}
\definecolor{MyRed}{rgb}{1.0,0.0,0.0}
\definecolor{MyGold}{rgb}{0.75,0.6,0.12}
\definecolor{MyDarkgray}{rgb}{0.66, 0.66, 0.66}
\definecolor{JiayuanColor}{rgb}{0.60,0.43,0.48}

\newcommand{\squishlist}{
    \begin{list}
    { 
        \setlength{\itemsep}{0pt}
        \setlength{\parsep}{1pt}
        \setlength{\topsep}{1pt}
        \setlength{\partopsep}{0pt}
        \setlength{\leftmargin}{1em}
        \setlength{\labelwidth}{1em}
        \setlength{\labelsep}{0.5em} 
    }
}

\newcommand{\squishend}{
  \end{list}  
}

\definecolor{LightCyan}{rgb}{0.88,1,1}


\usepackage{soul}

\usepackage[capitalize,noabbrev]{cleveref}

\theoremstyle{plain}
\newtheorem{theorem}{Theorem}[section]

\newtheorem{lemma}[theorem]{Lemma}

\theoremstyle{definition}

\theoremstyle{remark}

\usepackage[textsize=tiny]{todonotes}

\usepackage{color}
\definecolor{myGreen}{RGB}{80,180,0}

\icmltitlerunning{Sounding that Object: Interactive Object-Aware Image to Audio Generation}

\begin{document}

\twocolumn[
\icmltitle{Sounding that Object: Interactive Object-Aware Image to Audio Generation}



\icmlsetsymbol{equal}{*}

\begin{icmlauthorlist}
  \icmlauthor{Tingle Li}{ucb}
  \icmlauthor{Baihe Huang}{ucb}
  \icmlauthor{Xiaobin Zhuang}{bd}
  \icmlauthor{Dongya Jia}{bd}
  \icmlauthor{Jiawei Chen}{bd}
  \icmlauthor{Yuping Wang}{bd}
  \icmlauthor{Zhuo Chen}{bd}
  \icmlauthor{Gopala Anumanchipalli}{ucb}
  \icmlauthor{Yuxuan Wang}{bd}
\end{icmlauthorlist}

\icmlaffiliation{ucb}{University of California, Berkeley}
\icmlaffiliation{bd}{ByteDance Inc}

\icmlcorrespondingauthor{Tingle Li}{tingle@eecs.berkeley.edu}

\icmlkeywords{Sound Generation, Audio-Visual Learning, ICML}

\vskip 0.3in
]



\printAffiliationsAndNotice{}  

\begin{abstract}
Generating accurate sounds for complex audio-visual scenes is challenging, especially in the presence of multiple objects and sound sources. In this paper, we propose an {\em interactive object-aware audio generation} model that grounds sound generation in user-selected visual objects within images. Our method integrates object-centric learning into a conditional latent diffusion model, which learns to associate image regions with their corresponding sounds through multi-modal attention. At test time, our model employs image segmentation to allow users to interactively generate sounds at the {\em object} level. We theoretically validate that our attention mechanism functionally approximates test-time segmentation masks, ensuring the generated audio aligns with selected objects. Quantitative and qualitative evaluations show that our model outperforms baselines, achieving better alignment between objects and their associated sounds. Project site: {\small \tt \url{https://tinglok.netlify.app/files/avobject/}}.
\end{abstract}

\begin{figure*}[t]
    \centering
    \includegraphics[width=\linewidth]{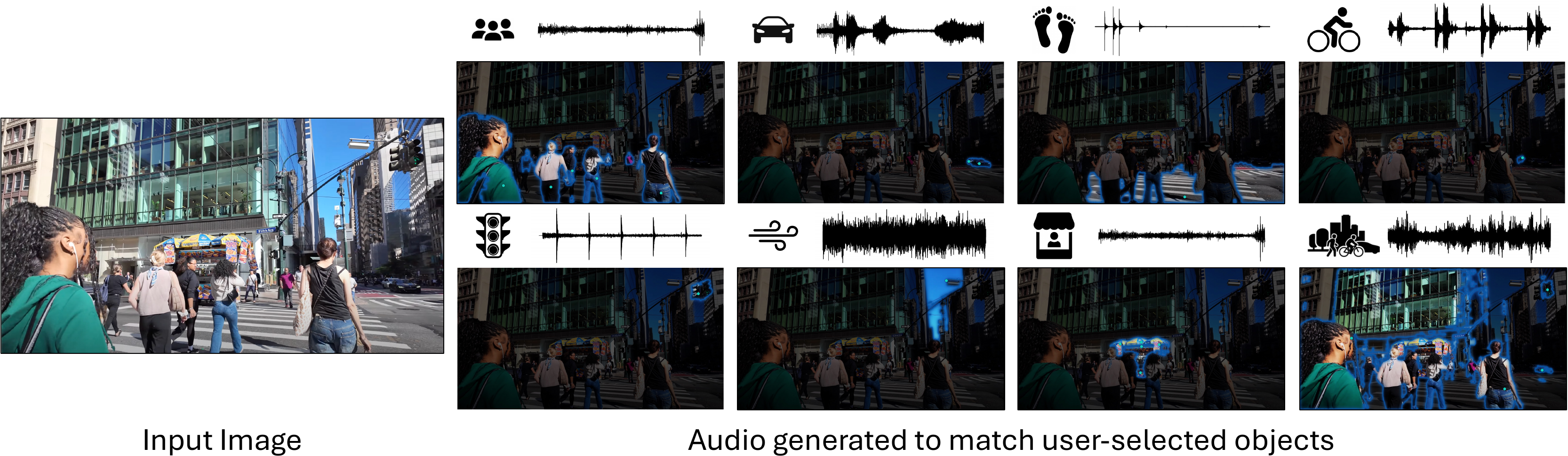}
    \caption{{\bf Interactive object-aware audio generation}. We generate sound aligned with specific visual objects in complex scenes. Users can select one or more objects in the scene using segmentation masks, and our model generates audio corresponding to the selected objects. Here, we show a busy street with multiple sound sources (left). After training, our model generates object-specific audio (right), such as crowd noise for people, engine sounds for cars, and blended audio for multiple objects.
    }
    \label{fig:teaser}
\end{figure*}

\section{Introduction}

Humans naturally perceive the world as ensembles of distinct objects and their associated sounds \cite{bregman1994auditory}. For example, in a busy city street (Figure~\ref{fig:teaser}), we can identify sounds from multiple objects, such as honks, footsteps, and chatters. However, replicating such object-level specificity remains challenging for computational models. Despite notable advances in audio generation \cite{van2001foleyautomatic, kong2019acoustic, yang2023diffsound}, existing methods often generate holistic soundscapes that fail to accurately reproduce the distinct sounds of specific objects~\cite{pijanowski2011soundscape}. In complex scenes, models may either \emph{forget} subtle sounds (e.g., footsteps) or \emph{bind} co-occurring events (e.g., crowd noise and wind) even when only one is intended, leading to inaccurate sound textures \cite{mcdermott2011sound}.

Recent progress in vision-based models~\cite{sheffer2023hear} relies on analyzing the entire visual scene to produce a single soundtrack, but this broad perspective may overlook subtle yet important sound sources. Text-based models~\cite{liu2023audioldm}, on the other hand, face difficulties when a prompt represents multiple events, either omitting certain sounds or conflating them with others due to entangled feature correlations \cite{wu2023audio}. While manually reweighting individual sound events in the diffusion latent~\cite{xue2024auffusion} can mitigate these issues, it remains labor-intensive and impractical for large-scale applications. Fundamentally, these challenges arise because real-world sounds are often \emph{imbalanced} and \emph{confounding} in complex scenes, making it difficult to disentangle distinct sound sources.

To overcome these limitations, we propose an {\em interactive object-aware audio generation} model that grounds each generated sound in a specific visual object. Inspired by how humans parse complex soundscapes \cite{gaver1993world}, our model not only processes the overall scene context (e.g., a city street) but also decouples separate events (e.g., honks, footsteps). Drawing on object-centric learning~\cite{greff2019multi}, we build our model upon a conditional audio generation framework~\cite{liu2023audioldm} and introduce multi-modal dot-product attention \cite{vaswani2017attention} to learn sound-object associations through self-supervision \cite{zhao2018sound, afouras2020self}, which fundamentally overcomes the problem of \textit{forgetting} or \textit{binding} sound events.

To provide finer control and interactivity, we leverage segmentation masks~\cite{kirillov2023segment} to convert user queries into attention maps at test time, allowing users to select specific objects in an image (e.g., car shapes) to generate the corresponding sounds (e.g., engine sounds) with simple mouse clicks. Since these masks guide the model to focus on objects of interest, even subtle sound events can be captured more accurately than with scene-wide analysis alone. Moreover, because the entire image still informs the generation process, selecting multiple objects naturally blends their sounds into a consistent environment, rather than merely layering independent audio clips.

Through quantitative evaluations and human perceptual studies, we show that our model generates more complete and contextually relevant soundscapes than existing baselines. Additionally, we provide qualitative results and theoretical analysis to demonstrate that our object-grounding mechanism is functionally equivalent to segmentation masks. In summary, our contributions include:
\begin{itemize}[topsep=0pt, noitemsep, leftmargin=*]
\item An interactive object-aware audio generation model that links sounds to user-selected visual objects via masks.
\item A mechanism that replaces attention with segmentation masks at test time, allowing fine-grained control over which objects, and thus which sounds, are present in the generated audio.
\item Empirical and theoretical validation demonstrating our model outperforms baselines in sound-object alignment and user controllability while maintaining audio quality.
\end{itemize}
\section{Related Work}

\paragraph{Object discovery.}
Object-centric learning aims to represent visual scenes as compositions of discrete objects, enabling models to understand and manipulate individual entities within a scene. Unsupervised object discovery methods have been developed to decompose visual scenes into object representations without explicit annotations \cite{greff2019multi, burgess2019monet, locatello2020object}. In the audio-visual realm, prior studies have explored audio localization \cite{arandjelovic2018objects, rouditchenko2019self, chen2021localizing, mo2022localizing, hamilton2024separating}, separation \cite{zhao2018sound, afouras2020self}, and spatialization \cite{li2024cyclic} by using the correspondence between visual objects and their corresponding audio. In concurrent work, SSV2A \cite{guo2024gotta} introduces bounding boxes from external object detectors to generate audio from multiple sound sources. In contrast, our model interactively generates object-specific sound, without requiring explicit object segmentations and representations during training.

\mypar{Predicting sound from images and text.}
Generating sounds from visual and textual inputs has gained notable attention recently. Image-based methods focus on synthesizing sounds from visual cues such as physical interactions \cite{van2001foleyautomatic, owens2016visually}, human movements \cite{gan2020foley, su2021does, ephrat2017vid2speech, prajwal2020learning, hu2021neural}, musical instrument performances \cite{koepke2020sight}, and content from open-domain images and videos \cite{zhou2018visual, iashin2021taming, sheffer2023hear, luo2023diff, tang2023any, wang2024v2a, tang2024codi, xing2024seeing, zhang2024foleycrafter, cheng2024taming, chen2024video}. These approaches typically generate audio that corresponds to the entire visual scene without isolating individual sound sources, resulting in holistic sound generation. Text-based methods aim to produce sounds from textual descriptions using generative models like GANs and diffusion models \cite{yang2023diffsound, kreuk2023audiogen, liu2023audioldm, huang2023make, saito2025soundctm, evans2025stable}. However, when prompts contain multiple sound events, these methods often struggle to capture all the desired audio elements \cite{wu2023audio}. Unlike these models, our method generates sounds for user-selected one or more objects within images. This offers enhanced control and precision in audio generation.

\mypar{Audio-visual learning.}
Many works have focused on audio-visual associations due to their inherent correspondence in videos. A line of works explores the semantic correspondence, identifying which sounds and visuals are commonly associated with one another \cite{arandjelovic2017look}. This includes representation learning \cite{morgado2021audio, huang2023mavil}, source localization \cite{chen2021localizing, harwath2018jointly, chen2023sound}, audio stylization \cite{chen2022visual, li2024self}, as well as scene classification \cite{chen2020vggsound, gemmeke2017audio, du2023uni} and generation \cite{li2022learning, sung2023sound}. Other studies leverage spatial correspondence between audio and visual streams \cite{owens2018audio, korbar2018cooperative, patrick2021space} to tackle tasks like source separation \cite{zhao2018sound, zhao2019sound, ephrat2018looking, gao2018learning, li2020atss, chen2023iquery, dong2022clipsep, cheng2024omnisep}, Foley sound synthesis \cite{owens2016visually, du2023conditional}, and audio spatialization \cite{gao20192, morgado2018self, yang2020telling}. Inspired by these works, we aim to generate sound from user-selected objects within images.
\section{Interactive Object-Aware Audio Generation}
\label{sec:method}
Our goal is to generate sound from user-selected objects within an image in an interactive way. We cast this problem by learning the correlation between audio and its corresponding visual scene and then using this correlation to predict the sound from the activated region. To achieve this, we: (\romannumeral1) fine-tune an off-the-shelf conditional audio generation model for sound synthesis; (\romannumeral2) train an audio-guided visual object grounding model to isolate the desired object; (\romannumeral3) theoretically demonstrate the equivalence between the segmentation mask and our grounding model.

\begin{figure*}[t]
    \centering
    \includegraphics[width=\linewidth]{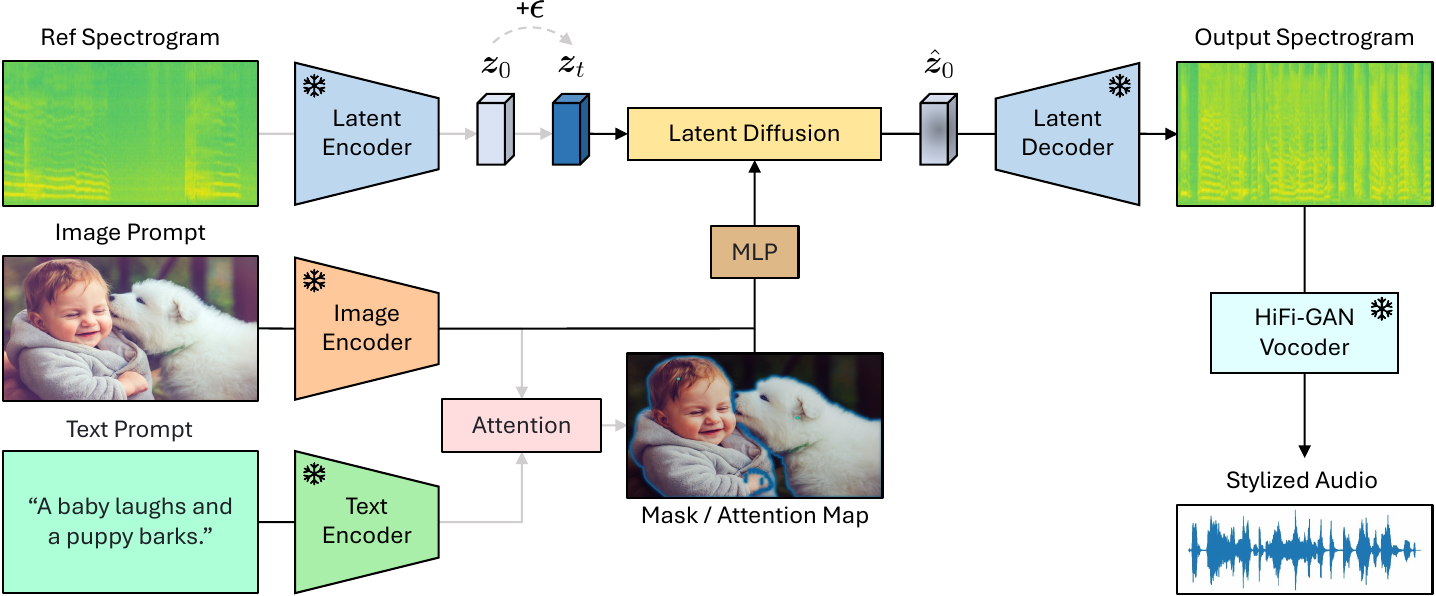}
    \caption{{\bf Model architecture.} We encode the reference spectrogram via a pre-trained latent encoder. An image and text prompt are processed by separate encoders, and their embeddings are fused using an attention mechanism to highlight relevant objects. We then feed these conditioned features and noisy latent into a latent diffusion model to generate the object-specific audio. Finally, the latent decoder reconstructs the spectrogram, and a pre-trained HiFi-GAN vocoder generates the final audio waveform. At test time, we replace the attention with a user-provided segmentation mask, and the latent encoder for the reference spectrogram is \textcolor[RGB]{176,176,176}{\em not} used.}
    \label{fig:method}
\end{figure*}

\subsection{Conditional Audio Generation Model}

\paragraph{Conditional latent diffusion model.}
We adopt a pre-trained conditional latent diffusion model \cite{liu2023audioldm} to generate audio conditioned on textual inputs. Building upon latent diffusion models \cite{ho2020denoising, rombach2022high}, our model operates in the latent space to improve computational efficiency. Specifically, given a text prompt $\bm{t}_q$ describing the desired sound and a noise vector $\bm{\epsilon} \sim \mathcal{N}(\mathbf{0}, \mathbf{I})$, the model iteratively denoises the latent variables over $N$ steps to generate the corresponding audio.

Our model is trained to predict the added noise at each denoising step $n$, conditioned on the textual input $\bm{t}_q$. The training objective minimizes the difference between the predicted noise and the true noise:
\begin{equation}
\label{eq:loss_diffusion}
\mathcal{L}_\theta = \mathbb{E}_{\bm{z}_0, \bm{t}_q, \bm{\epsilon} \sim \mathcal{N}(\mathbf{0}, \mathbf{I}), n} \Vert \bm{\epsilon} - \bm{\epsilon}_\theta(\bm{z}_n, n, \bm{t}_q) \Vert^{2}_{2}\text{ ,}
\end{equation}
where $\bm{z}_0$ is the latent representation of the ground truth audio, $\bm{z}_n$ is the noisy latent at step $n$, and $\bm{\epsilon}_\theta$ is the denoising model parameterized by $\theta$.

\mypar{Mel-spectrograms compression.}
We compress mel-spectrograms into a lower-dimensional latent space using a variational autoencoder (VAE) \cite{kingma2013auto}. The VAE encodes the mel-spectrogram $\bm{a} \in \mathbb{R}^{T \times F}$ into a latent representation $\bm{z} \in \mathbb{R}^{T' \times F' \times d}$, where $T'$ and $F'$ are reduced temporal and frequency dimensions, and $d$ is the dimensionality of the latent embeddings.

\mypar{Textual representation.}
We represent the textual input $\bm{t}_q$ using a pre-trained text encoder from CLAP \cite{elizalde2023clap}, which maps the text into an embedding space $\mathcal{E}_t(\bm{t}_q) \in \mathbb{R}^L$, where $L$ denotes the embedding dimension. These text embeddings capture semantic information about the desired sound and are used to condition the diffusion model through cross-attention mechanisms \cite{vaswani2017attention}.

\mypar{Classifier-free guidance.}
We employ classifier-free guidance (CFG) \cite{ho2022classifier} to encourage the model to learn both conditional and unconditional denoising. During training, we randomly omit the conditioning input $\bm{t}_q$ with a 10\% probability. At test time, we use a guidance scale $\lambda \geq 1$ to interpolate between the conditional and unconditional predictions:
\begin{equation}
\label{eq:cfg}
\tilde{\bm{\epsilon}}_\theta(\bm{z}_n, n, \bm{t}_q) = \lambda \cdot \bm{\epsilon}_\theta(\bm{z}_n, n, \bm{t}_q) + (1 - \lambda) \cdot \bm{\epsilon}_\theta(\bm{z}_n, n, \emptyset),
\end{equation}

where $\bm{\epsilon}_\theta(\bm{z}_n, n, \emptyset)$ is the unconditional prediction.


\mypar{Waveform reconstruction.}
After generating the latent representation of the audio, we reconstruct the corresponding waveform. The decoder part of the VAE transforms the latent representation $\bm{z}_0$ back into a mel-spectrogram. Subsequently, a pre-trained HiFi-GAN neural vocoder \cite{kong2020hifi} is used to synthesize the time-domain audio waveform from the mel-spectrogram, producing the final audio output.

\subsection{Text-Guided Visual Object Grounding Model}
\label{subsec:ground}
\paragraph{Visual representation.}
To ground the visual objects corresponding to the desired sound, we extract features from the input image using a pre-trained visual encoder. Specifically, we utilize CLIP \cite{radford2021learning} to encode the image into a set of visual patches embeddings $\mathcal{E}_v(\bm{i}_q) \in \mathbb{R}^{P \times L}$, where $\bm{i}_q$ is the input image, $P$ is the number of patches, and $L$ denotes the embedding dimension (matching that of the text embeddings). These embeddings capture both semantic and spatial information of the visual scene.

\mypar{Scaled dot-product attention.}
We employ scaled dot-product attention \cite{vaswani2017attention} to fuse the textual and visual inputs, allowing the model to focus on specific objects within the scene. Before computing the attention, the text embeddings $\mathcal{E}_t(\bm{t}_q)$ and patch embeddings $\mathcal{E}_v(\bm{i}_q)$ are linearly projected to obtain the query, key, and value matrices. Specifically, we compute:
\begin{equation}
\label{eq:linear_proj}
\bm{Q} = \mathcal{E}_t(\bm{t}_q)\bm{W}^Q, ~ \bm{K} = \mathcal{E}_v(\bm{i}_q)\bm{W}^K, ~ \bm{V} = \mathcal{E}_v(\bm{i}_q)\bm{W}^V\text{,}
\end{equation}
where $\bm{W}^Q$, $\bm{W}^K$, and $\bm{W}^V$ are learnable projectors.

We then compute the attention weights between the projected text and each projected image patch, grounding the text in the visual domain:
\begin{equation}
\label{eq:attn}
\text{Attention}(\bm{Q}, \bm{K}, \bm{V}) = \text{softmax}\left( \frac{\bm{Q} \bm{K}^\top}{\sqrt{d_k}} \right) \bm{V}\text{ ,}
\end{equation}
where $d_k$ is the dimensionality of the key embedding.

After obtaining the attention output, we apply an MLP layer \cite{murtagh1991multilayer} to further refine the fused representations, which enables the model to attend to image regions corresponding to the text input. In this way, we integrate the images $\bm{i}_q$ with the diffusion process, allowing the model to learn to focus on the relevant regions in the image through self-supervision.

\mypar{Learnable positional encoding.}
To enhance the model's ability to localize objects within the image, we incorporate learnable positional encodings \cite{devlin2018bert} into the attention mechanism. These encodings are added to the key and value embeddings, providing spatial information about the image patches. By learning positional information, the model can better distinguish between objects in different locations, improving grounding precision.

\mypar{Segmentation mask at test time.}
After training, we substitute the attention weights derived from the scaled dot-product attention with segmentation masks generated by the segment anything model (SAM) \cite{kirillov2023segment}. We rescale the raw outputs of SAM into a normalized mask $\bm{m}_q \in \R^P$, matching the mean and variance of the attention weights. This allows us to generate the desired object's sound by focusing on the regions specified by the segmentation mask. Since SAM's masks can be obtained using either text prompts or point clicks, our model supports interactive image-to-audio generation, allowing users to intuitively select objects of interest and generate their associated sounds.

\subsection{Theoretical Analysis}
\label{subsec: theory}

\newcommand{\cQ}{\mathcal{Q}}
\newcommand{\cE}{\mathcal{E}}
\newcommand{\bE}{\mathbb{E}}
\newcommand{\bV}{\bm{V}}
\newcommand{\bP}{\mathbb{P}}
\newcommand{\info}{\mathrm{InfoNCE}}
\newcommand{\clip}{\mathrm{contrast}}
\newcommand{\sam}{\mathrm{sam}}
\newcommand{\err}{\mathrm{err}_{\mathrm{test}}}

One may notice that our training pipeline uses both text and image encoders, but the test-time computation involves only the image encoder, where the softmax attention weights are replaced by the segmentation masks. 
This indicates an \emph{out-of-distribution} generalization ability \cite{lin2023clip}, where our model trained on the softmax attention weights computed by CLAP \& CLIP embeddings (\Eqref{eq:attn}) is able to generalize well on the segmentation masks computed by SAM. 
We hypothesize that this ability is rooted in the alignment of contrastive losses and the dot-product attention mechanism. Recall that the InfoNCE loss~\cite{gutmann2010noise,oord2018representation} for the text encoder in contrastive learning is given by $\mathcal{L}_t\left(\cE_t,\cE_v\right) =:$
\begin{align}\label{eq:clip-loss}
    \bE_{x^T,x^{I}_{1:N}}\left[-\log\frac{\exp\left(\langle \cE_v(x^T),\cE_t(x^{I}_1)\rangle/\tau\right)}{\sum_{j=1}^N\exp\left(\langle \cE_v(x^T),\cE_t(x^{I}_{j})\rangle/\tau\right)}\right]
\end{align}
where $(x^T,x^{I}_1)$ is the matching text-image pair, and $x^{I}_{2},\dots, x^{I}_{N}$ are the negative image samples associated with $x^T$. Notice that if we substitute $x^T$ with the text input $\bm{t}_q$, $x^{I}_{1:N}$ with the image patches $\bm{i}_q$, and $x^{I}_1$ with the matching image patch (with the text input), then the loss in \Eqref{eq:clip-loss} becomes the Maximum Likelihood Estimation (MLE) loss of the softmax attention weights in \Eqref{eq:attn} (under proper scaling in the exponents). 
Therefore, the encoders $\cE_v,\cE_t$ are able to assign high attention weights to image patches that match with textual inputs, and low attention weights to irrelevant image patches, \emph{working effectively as the segmentation mask at test time}. 
As such, the audio generation model is trained with the ability to focus only on the selected objects by segmentation masks.

In the following theorem, we formalize the above argument into a test-time error guarantee. We let $f$ denote the composition of the trained MLP layers and the trained audio generation model, such that $f$ maps an attention output $a_q$ to an audio output $s_q$ (on query $q$), and let $v$ denote the value metric that maps a sound-image-mask tuple $(s,i,m)$ to a real number $v(s,i,m) \in \R$. Our goal is to bound the following test error
\begin{align*}
    \err := \E_q[v(f^*(p_q\bm{V}^*),\bm{i}_q,p_q) - v(f(\bm{m}_q\bm{V}),\bm{i}_q,\bm{m}_q)]
\end{align*}
i.e. the expected (over the randomness of test query $q$) gap between (i) the value $v(f^*(p_q\bm{V}^*),\bm{i}_q,p_q)$ achieved by the optimal model $(f^*,\bm{V}^*)$ and ground-truth mask $p_q$, and (ii) the value $v(f(\bm{m}_q\bm{V}),\bm{i}_q,\bm{m}_q)$ of the trained model $(f,\bm{V})$ using SAM segmentation $\bm{m}_q$ at test time. Here, $f^*$ and $\bm{V}^*$ are the ground-truth counterpart of $f$ and value matrix, $p_q \in \Delta^P$ is the (normalized) ground-truth mask of query $q$ such that $p_{q,k} = \frac{\bP(\bm{t}_q|\bm{i}_{q,k})}{\sum_{l=1}^P\bP(\bm{t}_q|\bm{i}_{q,l})}$ for patch index $k \in \{1,\dots,P\}$, $a_q$ represents the attention output computed by \Eqref{eq:attn}. Note that $f(\bm{m}_q\bm{V})$, the audio output of the trained model, depends on the segmentation mask $\bm{m}_q$ instead of the ground-truth mask $p_q$ or text input $\bm{t}_q$.
\begin{theorem}\label{thm:test-err}
Let $\epsilon_\sam:= \E_q[\|\bm{m}_q - p_q\|_{\ell_1}]$ denote the expected $\ell_1$ error of the segmentation model. Let $\epsilon_{f}, \epsilon_{\bV}$ denote the expected error of $f$ and $\bV$ under the pre-trained CLAP \& CLIP embeddings respectively, and $\epsilon_\clip$ denote the expected contrastive loss of the encoders, more precisely,
\begin{align*}
    \epsilon_{f} = &~ \E_q[v(f^*(a_q),\bm{i}_q,p_q)] - \E[v(f(a_q),\bm{i}_q,p_q)],\\
    ~~\epsilon_{\bV} = &~ \|\bV - \bV^*\|_\infty\\
    \epsilon_{\clip} = &~ \bE_{q,d \sim p_q}\left[-\log\frac{\exp\left(\langle \cE_v(\bm{t}_q),\cE_t(i_{q,d})\rangle_{\Sigma}\right)}{\sum_{k=1}^P\exp\left(\langle \cE_v(\bm{t}_q),\cE_t(i_{q,k})\rangle_{\Sigma}\right)}\right]\\
    &~ - \bE_{q,d \sim p_q}\left[-\log p_{q,d}\right].
\end{align*}
where $\langle \cdot , \cdot \rangle_{\Sigma}$ is the local inner product under $\Sigma := \bm{W}^K(\bm{W}^Q)^\top/\sqrt{d_k}$ (note that $\epsilon_{\clip}$ is simply the difference between the model's InfoNCE loss and the optimal InfoNCE loss, under the similarity metric $\langle \cdot , \cdot \rangle_{\Sigma}$). Suppose $\|\bV^*\|_\infty, \|\bV\|_\infty \leq B_v$, $v$ is $L_v$-Lipschitz, and $f,f^*$ are $L_{f}$-Lipschitz, then we have
\begin{align}\label{eq:test-error-bound}
    \err \leq &~ L_v \cdot L_{f}  \cdot\left(\epsilon_{\bV} + B_v \cdot \left( \epsilon_{\sam} + 2\sqrt{2\epsilon_\clip} \right)\right)\notag\\
    &~ + L_v \cdot \epsilon_{\sam} + \epsilon_f.
\end{align}
\end{theorem}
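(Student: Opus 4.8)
The plan is to bound $\err$ by a telescoping decomposition that first swaps the optimal model's output on the true mask-weighted value $p_q\bV^*$ for its output on the actual attention, then successively replaces $f^*\to f$, the attention weights by the ground-truth mask $p_q$, and finally $p_q$ by the SAM mask $\bm{m}_q$. Writing $\alpha_q\in\Delta^P$ for the softmax attention weights of \Eqref{eq:attn}, so that the attention output factors as $a_q=\alpha_q\bV$, I would decompose the integrand of $\err$ as
\begin{align*}
&v(f^*(p_q\bV^*),\bm{i}_q,p_q) - v(f(\bm{m}_q\bV),\bm{i}_q,\bm{m}_q)\\
&= \underbrace{\left[v(f^*(p_q\bV^*),\bm{i}_q,p_q) - v(f^*(a_q),\bm{i}_q,p_q)\right]}_{A}\\
&\quad+ \underbrace{\left[v(f^*(a_q),\bm{i}_q,p_q) - v(f(a_q),\bm{i}_q,p_q)\right]}_{B}\\
&\quad+ \underbrace{\left[v(f(a_q),\bm{i}_q,p_q) - v(f(\bm{m}_q\bV),\bm{i}_q,p_q)\right]}_{C}\\
&\quad+ \underbrace{\left[v(f(\bm{m}_q\bV),\bm{i}_q,p_q) - v(f(\bm{m}_q\bV),\bm{i}_q,\bm{m}_q)\right]}_{D}.
\end{align*}
Taking $\E_q$, term $B$ equals $\epsilon_f$ by definition, and term $D$ is handled by the $L_v$-Lipschitzness of $v$ in its mask argument, giving $\E_q[D]\le L_v\,\E_q[\|p_q-\bm{m}_q\|_{\ell_1}] = L_v\epsilon_\sam$. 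These supply the two lower-order summands of \Eqref{eq:test-error-bound}.

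For $A$ and $C$ I would chain the $L_v$-Lipschitzness of $v$ with the $L_f$-Lipschitzness of $f,f^*$ to reduce both to the distance between the arguments fed into $f$, i.e. $\E_q[A]\le L_vL_f\,\E_q[\|p_q\bV^*-\alpha_q\bV\|]$ and $\E_q[C]\le L_vL_f\,\E_q[\|\alpha_q\bV-\bm{m}_q\bV\|]$. For $A$, split $p_q\bV^*-\alpha_q\bV = p_q(\bV^*-\bV) + (p_q-\alpha_q)\bV$: since $p_q$ is a probability vector the first piece has $\ell_\infty$-norm at most $\|\bV^*-\bV\|_\infty=\epsilon_{\bV}$, while the second is at most $B_v\|p_q-\alpha_q\|_{\ell_1}$ using $\|\bV\|_\infty\le B_v$. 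For $C$, factor out $\bV$ and apply a triangle inequality through $p_q$ to get at most $B_v\big(\|\alpha_q-p_q\|_{\ell_1}+\epsilon_\sam\big)$. Everything now hinges on the single quantity $\E_q[\|\alpha_q-p_q\|_{\ell_1}]$, the gap between the learned softmax attention weights and the ground-truth mask.

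The crux is to control this gap through the contrastive loss, and I expect this identification to be the main obstacle. The key observation is that $\epsilon_\clip$ is exactly the expected KL divergence $\E_q[\KL(p_q\,\|\,\alpha_q)]$: the normalized exponential scores under $\langle\cdot,\cdot\rangle_\Sigma$ with $\Sigma=\bm{W}^K(\bm{W}^Q)^\top/\sqrt{d_k}$ coincide with the softmax attention weights $\alpha_q$, so the first expectation in $\epsilon_\clip$ is the cross-entropy $\E_{q,d\sim p_q}[-\log\alpha_{q,d}]$ and the second is the entropy $\E_{q,d\sim p_q}[-\log p_{q,d}]$, whose difference is $\E_q[\KL(p_q\|\alpha_q)]$. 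Verifying this matching of the softmax weights with the normalized similarity scores is precisely what licenses translating a loss gap into a distributional distance. Applying Pinsker's inequality pointwise gives $\|\alpha_q-p_q\|_{\ell_1}\le\sqrt{2\KL(p_q\|\alpha_q)}$, and Jensen's inequality (concavity of the square root) yields $\E_q[\|\alpha_q-p_q\|_{\ell_1}]\le\sqrt{2\,\E_q[\KL(p_q\|\alpha_q)]}=\sqrt{2\epsilon_\clip}$.

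Substituting, I obtain $\E_q[A]\le L_vL_f(\epsilon_{\bV}+B_v\sqrt{2\epsilon_\clip})$ and $\E_q[C]\le L_vL_fB_v(\sqrt{2\epsilon_\clip}+\epsilon_\sam)$. Adding $\E_q[A]+\E_q[C]$ merges the two $\sqrt{2\epsilon_\clip}$ contributions into $2\sqrt{2\epsilon_\clip}$, reproducing $L_vL_f\big(\epsilon_{\bV}+B_v(\epsilon_\sam+2\sqrt{2\epsilon_\clip})\big)$; combining with $\E_q[B]=\epsilon_f$ and $\E_q[D]=L_v\epsilon_\sam$ then gives exactly \Eqref{eq:test-error-bound}. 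The remaining steps — the Lipschitz chaining and the convex-combination bounds for the probability-weighted value matrices — are routine, so the conceptual weight of the argument rests entirely on the KL identity and the Pinsker/Jensen reduction of $\epsilon_\clip$ to an $\ell_1$ error that simultaneously feeds both $A$ and $C$.
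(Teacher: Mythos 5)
Your proposal is correct and follows essentially the same route as the paper's proof: a telescoping decomposition handled by the Lipschitz assumptions, with the key step being the identification $\epsilon_\clip = \E_q[\KL(p_q\|\alpha_q)]$ followed by Pinsker's inequality and Jensen (the paper states this as Lemma~\ref{lem:clip-loss}), and the $\sqrt{2\epsilon_\clip}$ term entering twice (once for the optimal model's argument, once via the triangle inequality through $p_q$ for the SAM-mask argument). The only cosmetic differences are that you merge the paper's first two terms into one (splitting $p_q\bV^*-\alpha_q\bV$ through the intermediate point $p_q\bV$ rather than $u_q\bV^*$) and you swap the order of the final two substitutions, neither of which changes the bound.
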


Due to space constraints, the proof is deferred to the Appendix~\ref{sec:proof}. 
On the right hand side of \Eqref{eq:test-error-bound}, the error terms $\epsilon_{\bV},\epsilon_{\sam} ,\epsilon_\clip,\epsilon_f$ have been minimized by massive training~\citep{radford2021learning, elizalde2023clap, kirillov2023segment}; furthermore, the regularity parameters $L_v,L_f,B_v$ are standard in learning theory literature~\citep{Anthony_Bartlett_1999, neyshabur2015norm, bartlett2017spectrally} and can be bounded with guarantees~\citep{tsuzuku2018lipschitz, combettes2020lipschitz, fazlyab2019efficient}. 
Consequently, Theorem~\ref{thm:test-err} implies that the test-time error can be effectively upper bounded, hence supporting the substitution of attention weights derived from scaled dot-product attention with segmentation masks generated by the segmentation model during testing. 
Our theory is further corroborated by empirical findings in Section~\ref{subsec:ablation}, where we observe that using dot-product attention weights achieves performance on par with using segmentation masks, while additive attention fails completely.

\section{Experiments}
\subsection{Experiment Setup}
\paragraph{Dataset.}
\label{subsec:data}
We use AudioSet \cite{gemmeke2017audio} as our primary data source, which consists of 4,616 hours of video clips, each paired with corresponding labels and captions. To ensure audio-visual correspondence, we perform several preprocessing steps similar to Sound-VECaps \cite{yuan2024improving}. This reduces the dataset to 748 hours of video for training. We then evaluate models on the AudioCaps (also a subset of AudioSet) \cite{kim2019audiocaps}, a widely used benchmark dataset for audio generation. Please see Appendix \ref{app:data_prepro} for more details on the dataset.

\begin{table*}[t]
\scriptsize
\centering
\setlength{\tabcolsep}{2.25mm}{
\begin{tabular}{l|ccccc|cccc}
\toprule
\textbf{Method} & \textbf{ACC} ($\uparrow$) & \textbf{FAD} ($\downarrow$) & \textbf{KL} ($\downarrow$) & \textbf{IS} ($\uparrow$) & \textbf{AVC} ($\uparrow$) & \textbf{OVL} ($\uparrow$) & \textbf{RET} ($\uparrow$) & \textbf{REI} ($\uparrow$) & \textbf{REO} ($\uparrow$) \\
\midrule
Ground Truth            & /     & /      & /     & /      & 0.962  & 4.12 $\pm$ 0.06 & 4.02 $\pm$ 0.05 & 4.06 $\pm$ 0.07 & / \\
\midrule
Retrieve \& Separate \cite{zhao2018sound} & 0.276 & 4.051 & 1.572 & 1.550 & 0.764 & 2.73 $\pm$ 0.02 & 2.54 $\pm$ 0.05 & 2.76 $\pm$ 0.04 & 2.49 $\pm$ 0.04 \\
AudioLDM 1 \cite{liu2023audioldm}   & 0.336 & 3.576  & 1.537 & 1.545  & 0.724  & 2.83 $\pm$ 0.07 & 3.09 $\pm$ 0.03 & 2.92 $\pm$ 0.02 & 2.18 $\pm$ 0.04 \\
AudioLDM 2 \cite{liu2024audioldm}  & 0.513 & 2.976  & 1.162 & 1.779 & 0.743  & 2.98 $\pm$ 0.04 & 3.19 $\pm$ 0.02 & 3.09 $\pm$ 0.03 & 2.47 $\pm$ 0.01 \\
Captioning \cite{li2022blip}   & 0.587 & 2.778 & 1.364 & 1.901  & 0.773  & 2.84 $\pm$ 0.03 & 3.15 $\pm$ 0.04 & 3.05 $\pm$ 0.06 & 2.63 $\pm$ 0.05 \\
Make-an-Audio \cite{huang2023make} & 0.309 & 3.555  & 1.443 & 1.673  & 0.712  & 2.74 $\pm$ 0.08 & 3.06 $\pm$ 0.05 & 2.89 $\pm$ 0.05 & 2.08 $\pm$ 0.04 \\
Im2Wav \cite{sheffer2023hear}       & 0.499 & 3.602  & 1.526 & 1.872  & 0.798  & 2.88 $\pm$ 0.05 & 3.12 $\pm$ 0.04 & 3.01 $\pm$ 0.05 & 2.48 $\pm$ 0.06 \\
SpecVQGAN \cite{iashin2021taming}   & 0.611 & 2.515  & 1.142 & 1.965  & 0.825  & 2.94 $\pm$ 0.04 & 3.26 $\pm$ 0.03 & 3.11 $\pm$ 0.06 & 2.51 $\pm$ 0.04 \\
Diff-Foley \cite{luo2023diff}   & 0.683 & 1.908  & 0.783 & 2.010  & 0.842  & 3.09 $\pm$ 0.06 & 3.43 $\pm$ 0.05 & 3.32 $\pm$ 0.03 & 2.52 $\pm$ 0.06 \\
CoDi \cite{tang2023any} & 0.672	& 1.954	& 0.856	& 1.936	& 0.833  & 3.00 $\pm$ 0.04 & 3.32 $\pm$ 0.03 & 3.31 $\pm$ 0.05 & 2.34 $\pm$ 0.02 \\
Seeing \& Hearing \cite{xing2024seeing} & 0.668	& 1.923	& 0.794	& 1.954	& 0.722 & 3.08 $\pm$ 0.05 & 3.38 $\pm$ 0.04 & 3.28 $\pm$ 0.06 & 2.49 $\pm$ 0.04 \\
FoleyCrafter \cite{zhang2024foleycrafter} & 0.732 &	1.760 & 0.665 & 2.007 & 0.811 & 3.19 $\pm$ 0.02 & 3.48 $\pm$ 0.03 & 3.32 $\pm$ 0.04 & 2.60 $\pm$ 0.04 \\
SSV2A \cite{guo2024gotta} & 0.806 & {\bf 1.265} & 0.525 & 2.100 & 0.893 & 3.22 $\pm$ 0.02 & 3.50 $\pm$ 0.03 & 3.35 $\pm$ 0.02 & 3.48 $\pm$ 0.06 \\
\midrule
Ours           & \textbf{0.859} & 1.271 & \textbf{0.517} & \textbf{2.102} & \textbf{0.891}  & \textbf{3.31 $\pm$ 0.04} & \textbf{3.62 $\pm$ 0.05} & \textbf{3.48 $\pm$ 0.04} & \textbf{3.74 $\pm$ 0.07} \\
\bottomrule
\end{tabular}
}
\caption{Quantitative comparison of our method and baselines across different objective and subjective metrics. The subjective OVL, RET, REI, and REO scores are presented with 95\% confidence intervals.}
\label{tb:all_result}
\end{table*}

\mypar{Model architecture.}
We employ the pre-trained VAE and HiFi-GAN vocoder in AudioLDM \cite{liu2023audioldm}. The VAE is configured with a latent dimensionality $d$ of 8 channels. For embedding extraction, we utilize the ``ViT-B/32" CLAP audio encoder \cite{elizalde2023clap} and the CLIP image encoder \cite{radford2021learning}. These embeddings are then incorporated into the U-Net-based diffusion model through cross-attention. We implement a linear noise schedule consisting of $N = 1000$ diffusion steps, from $\beta_1 = 0.0015$ to $\beta_N = 0.0195$. The DDIM sampling method \cite{song2020denoising} is used with 200 steps to facilitate efficient generation. At test time, we apply CFG with a guidance scale $\lambda$ set to 2.0.

\mypar{Training configuration.}
\label{subsec:conf}
To facilitate parallel training, each video's soundtrack is either truncated or zero-padded to achieve a fixed duration of 10 seconds and then converted to a 16 kHz sample rate. We apply a 512-point discrete Fourier transform with a frame length of 64 ms and a frame shift of 10 ms. For each video, a single visual frame is randomly chosen to serve as the input image. The model is then trained using the AdamW optimizer \cite{loshchilov2017decoupled} with a batch size of 64, a learning rate of $10^{-4}$, $\beta_1 = 0.95$, $\beta_2 = 0.999$, $\epsilon = 10^{-6}$, and a weight decay of $10^{-3}$ over 300 epochs.

\mypar{Evaluation metrics.}
We use both objective and subjective metrics (see Appendix \ref{app:eval} for more evaluation details) to evaluate the performance of our model. For the objective evaluation, we employ several metrics, including Sound Event Accuracy (ACC), which leverages the PANNs model \cite{kong2020panns} to predict and sample sound event logits based on the annotated labels and then compute the mean accuracy across the dataset. We also measure the semantic alignment between the output and target using four established metrics: (\romannumeral1) Fréchet Audio Distance (FAD) \cite{kilgour2019frechet}, which quantifies how close the generated audio is to the real audio in latent space; (\romannumeral2) Kullback-Leibler Divergence (KL), which assesses the alignment of distributions between the generated and target audio; (\romannumeral3) Inception Score (IS) \cite{salimans2016improved}, which evaluates the diversity of the generated audio; (\romannumeral4) Audio-Visual Correspondence (AVC) \cite{arandjelovic2017look}, which measures how well the resulting audio match the visual context.

For subjective evaluation, we conduct a human study to assess the quality and relevance of the generated audio. We present both the holistic samples and the object-selected samples. Each participant is provided with an input image, along with the corresponding generated audio, and is asked to rate each sample on a scale from 1 to 5 based on several criteria: (\romannumeral1) Overall Quality (OVL), which evaluates the general quality of the audio; (\romannumeral2) Relevance to the Text Prompt (RET), which assesses how well the audio matches any associated text description; (\romannumeral3) Relevance to the Input Image (REI), which judges the alignment between the audio and the visual content; (\romannumeral4) Relevance to the Selected Object (REO), which focuses on how well the generated audio aligns with a specific object in the visual scene.

\mypar{Baselines.}
\label{subsec:baseline}
We compare our method with several baseline models, each of which is adapted for our task: (\romannumeral1) Retrieve \& Separate \cite{zhao2018sound}, a two-stage object-aware model that first retrieves audio based on a text prompt \cite{elizalde2023clap}, then separates the object-specific audio from the specified visual object \cite{zhao2018sound}; (\romannumeral2) AudioLDM 1 \& 2 \cite{liu2023audioldm, liu2024audioldm}, which we fine-tune on our dataset for a fair comparison; (\romannumeral3) Captioning \cite{li2022blip}, a cascade model that takes input image, generates captions and feeds them to a pre-trained AudioLDM 2; (\romannumeral4) Make-an-Audio \cite{huang2023make}, which supports either text or image prompts for audio generation. We extract its image branch and fine-tune it on our dataset; (\romannumeral5) Im2Wav \cite{sheffer2023hear}, an image-guided open-domain audio generation model that operates auto-regressively. Since the original model generates only 4 seconds of audio, we retrain it on our dataset to better suit our task; (\romannumeral6) SSV2A and (\romannumeral7) CoDi, which are sound-source-aware and any-to-any generative models respectively. We use their image-to-audio branch for comparison; (\romannumeral8) SpecVQGAN \cite{iashin2021taming}, (\romannumeral9) Diff-Foley \cite{luo2023diff}, (\romannumeral10) Seeing \& Hearing \cite{xing2024seeing}, (\romannumeral11) FoleyCrafter \cite{zhang2024foleycrafter}, which are video-to-audio generative models. We modify them by using static images (randomly sampling a single frame from each video clip) as input and fine-tuning them on our dataset.

\begin{figure*}[t]
    \centering
    \includegraphics[width=\linewidth]{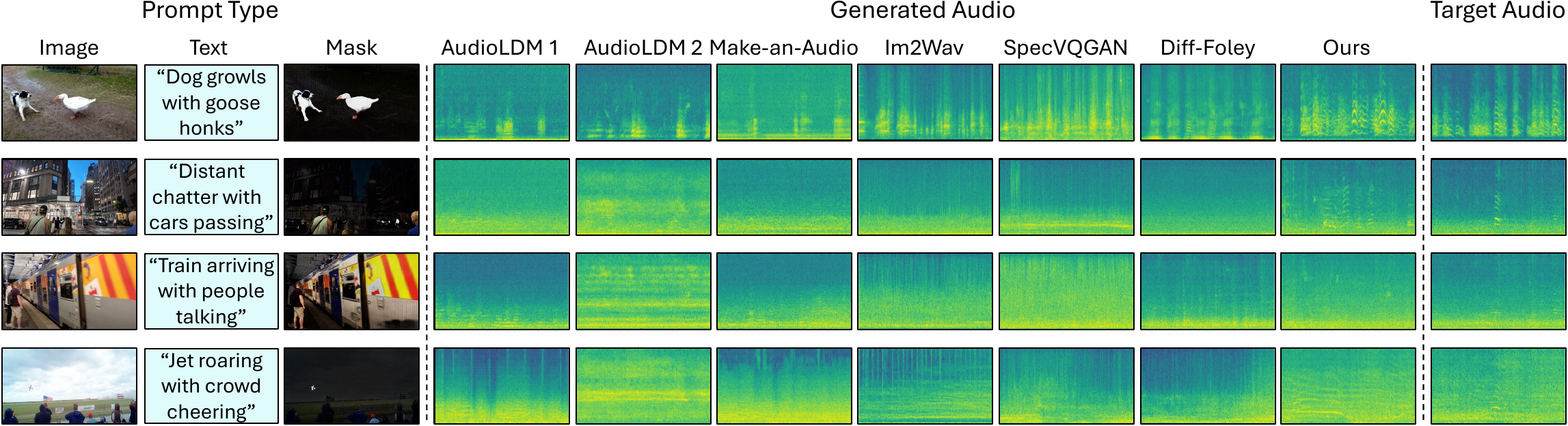}
    \caption{{\bf Qualitative model comparison}. We show audio generation results for our method and the baselines, each of which is conditioned on an image, text, or segmentation mask.
    }
    \label{fig:quality}
\end{figure*}

\subsection{Comparison to Baselines}
\label{subsec:comp_base}
\paragraph{Quantitative results.}
Table~\ref{tb:all_result} compares our approach to baselines on the AudioCaps dataset (see Appendix \ref{app:add_eval} for evaluations on another dataset), showing that our model outperforms across metrics and generates high-quality audio. In particular, our method achieves the highest ACC scores, demonstrating its ability to generate sounds closely linked to visual objects in the scene. Among baselines, SSV2A performs competitively, likely due to its object-level specificity from the external object detector. Diff-Foley, Seeing \& Hearing, and FoleyCrafter perform competitively, likely due to their contrastive representations enhancing audio-visual consistency. Make-an-Audio, Im2Wav, and SpecVQGAN achieve reasonable AVC scores but underperform on FAD and KL, suggesting limitations in audio quality. AudioLDM, Captioning, and CoDi show lower ACC and FAD metrics, likely reflecting that CLAP text embeddings fail to represent complex audio events. Retrieve \& Separate struggles with retrieval in multi-source scenes, limiting its performance in complex scenarios. These results demonstrate our model’s strength in leveraging object-level cues to generate contextually relevant sounds.

For subjective evaluation, we randomly select 100 generated samples from the test set, including 50 with manually created segmentation masks for specific objects (see Appendix \ref{app:human_eval}). These samples are then rated by 50 participants. Our model achieves the highest average ratings across all measures, with a significant lead in REO, indicating better alignment between generated sounds and objects in the image. Interestingly, baselines achieve similar REO scores, suggesting limited ability to link audio to object-level visual cues. Moreover, our model consistently outperforms in OVL, RET, and REI, further validating the objective metrics and demonstrating improved contextual alignment.

\mypar{Qualitative results.}
\label{subsec:qual}

\begin{table}[t]
\scriptsize
\centering
\setlength{\tabcolsep}{1.45mm}{
\begin{tabular}{lccc}
\toprule
\textbf{Method} & \textbf{Time ($\downarrow$)} & \textbf{Attempts ($\downarrow$)} & \textbf{Satisfaction ($\uparrow$)} \\
\midrule
AudioLDM 1 \cite{liu2023audioldm} & 7.34 & 3.20 & 2.00 $\pm$ 0.88 \\
AudioLDM 2 \cite{liu2024audioldm} & 5.10 & 2.40 & 2.80 $\pm$ 1.04 \\
FoleyCrafter \cite{zhang2024foleycrafter} & 3.00 & 2.80 & 3.00 $\pm$ 1.96 \\
SSV2A \cite{guo2024gotta} & 2.95 & 1.80 & 3.40 $\pm$ 1.42 \\
\midrule
Ours       & {\bf 2.67} & {\bf 1.60} & {\bf 3.60 $\pm$ 0.68} \\
\bottomrule
\end{tabular}}
\caption{Interaction satisfaction evaluation of user-driven audio generation methods. We report average time (minutes), number of attempts, and satisfaction score (with 95\% confidence intervals).}
\label{tb:satisfaction}
\end{table}

Figure~\ref{fig:quality} compares our method with generative baselines on the AudioCaps dataset. In the first example, where a dog and a goose are present, baselines generate only dog growls, missing the goose honks, while our method captures both sounds, demonstrating its object-aware capability. Similarly, in the second and third examples, baselines produce only partial sound events, whereas our model generates the complete soundscape. In the final example, featuring a small jet in the background with a cheering crowd, vision-based models fail to detect the jet due to its small size, generating only crowd and wind noises, while text-based models struggle to combine multiple sounds. Our approach captures all relevant sounds, highlighting its ability to generate accurate audio aligned with complex visual scenes. For a more direct experience, please view the results video on the \href{https://tinglok.netlify.app/files/avobject/}{project webpage}.

\mypar{Interaction satisfaction.}
We conduct another human study focusing on user-driven audio generation, comparing our method to text-based baselines (we exclude those that do not allow user prompting). We ask 5 experienced participants to generate ``baby laughs and puppy barks'' from a single image (the one in Figure \ref{fig:method}), and we measure the average time taken, the number of attempts required, and a 5-point subjective satisfaction score. As shown in Table \ref{tb:satisfaction}, text-based baselines often miss one of the sounds and require multiple prompt adjustments, leading to higher time and lower satisfaction. Our method, by contrast, consistently requires fewer attempts, takes less time, and achieves higher satisfaction, even for participants already familiar with prompting.

\subsection{Ablation Study and Analysis}
\label{subsec:ablation}

\begin{table}[t]
    \scriptsize
    \centering
    \setlength{\tabcolsep}{1.88mm}{
    \begin{tabular}{lccccc}
    \toprule
    \textbf{Method}       & \textbf{ACC ($\uparrow$)} & \textbf{FAD ($\downarrow$)} & \textbf{KL ($\downarrow$)} & \textbf{IS ($\uparrow$)} & \textbf{AVC ($\uparrow$)} \\
    \midrule
    (\romannumeral1) Frozen Diffusion   & 0.692          & 1.543 & 1.047          & 1.943           & 0.733        \\
    (\romannumeral2) Muiti-Head Attn. & 0.415          & 2.238          & 1.903          & \textbf{2.115}  & 0.887            \\
    (\romannumeral3) Additive Attn.       & 0.103          & 15.747         & 7.425          & 1.343          & 0.137            \\
    (\romannumeral4) Txt-Img Attn.   & 0.856          & \textbf{1.270} & 0.520 & 2.097          & 0.890            \\
    (\romannumeral5) Aud-Img Attn.   & 0.634          & 1.761          & 1.232          & 1.731          & 0.692            \\
    (\romannumeral6) Mask Training & 0.763 & 1.446 & 0.742 & 1.947 & 0.797 \\
    \midrule
    Ours                & \textbf{0.859} & 1.271          & \textbf{0.517}          & \textbf{2.102} & \textbf{0.891}   \\
    \bottomrule
    \end{tabular}}
    \caption{Quantitative ablation studies on the AudioCaps dataset.}
    \label{tb:ablation}
\end{table}

Table~\ref{tb:ablation} summarizes the ablation experiments. We explore the following model variations: (\romannumeral1) freezing the latent diffusion weights rather than fine-tuning them; (\romannumeral2) replacing single-head attention with multi-head attention; (\romannumeral3) altering the attention mechanism from dot-product to additive attention; (\romannumeral4) using text-image attention instead of segmentation masks during inference; (\romannumeral5) substituting text-image attention with audio-image attention; (\romannumeral6) using segmentation masks instead of attention during training. We also show additional results in Appendix \ref{app:add_results}.

\mypar{Effect of freezing diffusion weights.}
We test the impact of freezing the latent diffusion model weights instead of fine-tuning them during training. We observe that freezing the weights degrades the performance, which suggests that fine-tuning is required to achieve more coherent audio.

\mypar{Impact of attention head.}
We compare our single-head attention mechanism with the multi-head counterpart \cite{vaswani2017attention}. The multi-head approach enhances the alignment between textual inputs and the generated audio, leading to a stronger correspondence between text descriptions and sound outputs. However, this improvement reduces controllability when specifying specific audio characteristics based on the segmentation mask. We conjecture that this limitation arises because each head in the multi-head attention focuses on different regions of the input \cite{voita2019analyzing, hamilton2024separating}. While this strategy increases text-audio alignment, the lack of a clear definition for each head's specific scope reduces the interpretability of the final results. This likely contributes to the masking results deviating from expectations.

\mypar{Evaluation of attention scoring mechanism.}
We assess the role of the attention scoring function by replacing dot-product attention with the additive one \cite{bahdanau2014neural}. The additive variant collapses significantly, indicating that segmentation masks are not a suitable replacement for this attention. As explained by the theory in Section~\ref{subsec: theory}, this could be because addition operations are not compatible with the contrastive losses used by CLAP \& CLIP and segmentation masks generated by SAM, which disrupts our grounding model.

\paragraph{Choice of attention modality.}
We investigate the effectiveness of text-image attention compared to an adapted audio-image attention model \cite{li2024cyclic}. Results show a decline in performance, which could be attributed to the inherent limitations of the CLAP model in representing overlapping audio. This limitation probably introduces noise, thereby weakening the model's ability to form audio-visual associations essential for audio generation.

\mypar{Role of masking during training and inference.}
We compare the text-image attention to segmentation masks at test time. Results show that this attention achieves comparable performance to segmentation masks, suggesting both methods provide similar guidance (Section~\ref{subsec: theory}). Notably, using segmentation masks in both training and testing degrades performance. We hypothesize that masking entire object regions imposes an overly rigid prior, as sound is typically emitted from specific parts (e.g., a dog’s head rather than its tail). From a probabilistic viewpoint, hard masks sampled from the ground-truth distribution exhibit high variance, whereas soft attention, empowered by CLIP \& CLAP, directly approximates the ground-truth distribution. This allows the model to focus on sound-relevant regions while maintaining audio accuracy at test time.

\subsection{Cross-dataset Evaluation}
\paragraph{Visualization between grounding and masking.}
\begin{figure}[t]
    \centering
    \includegraphics[width=\linewidth]{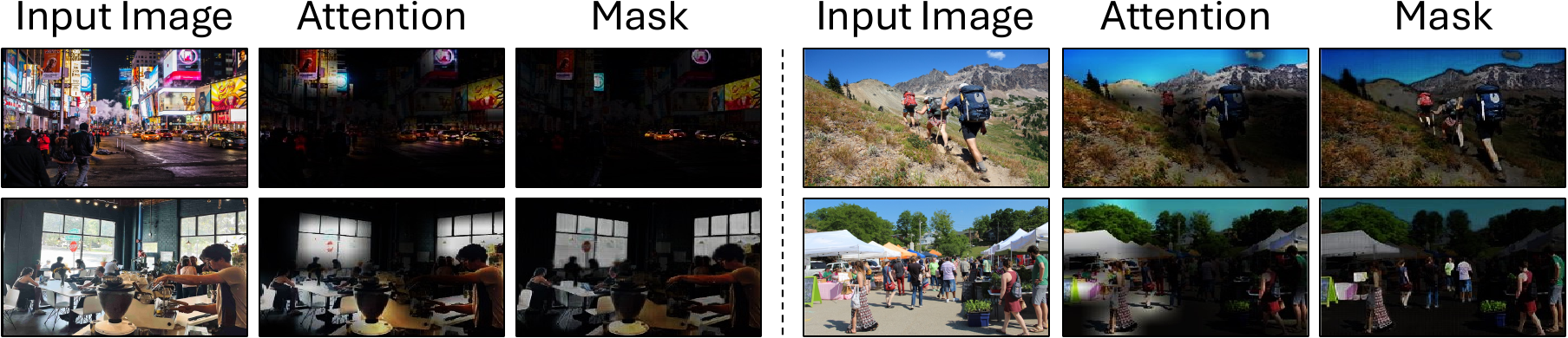}
    \caption{{\bf Visualization results}. We visualize the difference between attention maps and segmentation masks using images from Places \cite{zhou2017places} and text prompts from BLIP \cite{li2022blip}.}
    \label{fig:vis_attn}
\end{figure}
In Figure~\ref{fig:vis_attn}, we visualize the comparison between the attention maps generated by our model and the segmentation masks produced by SAM. For this, we use images from Places \cite{zhou2017places} and text prompts derived from BLIP \cite{li2022blip}. To visualize the attention maps, we apply bilinear interpolation to match the resolution of the segmentation masks. Our results show a strong alignment between our model's attention maps and the segmentation masks, providing empirical evidence for the theory in Section~\ref{subsec: theory} and the findings of the ablation study in Section~\ref{subsec:ablation}. While the segmentation masks represent a form of {\em hard} attention, directly highlighting entire selected objects, our model generates {\em soft} attention maps that probabilistically focus on the sound-relevant areas within each object. This similarity indicates that, through training, our model learns to capture object-specific regions similar to those identified by segmentation, achieving the desired grounding in a flexible manner. Furthermore, this observation suggests that attention maps can be replaced with segmentation masks at test time.

\begin{figure}[t]
    \centering
    \includegraphics[width=\linewidth]{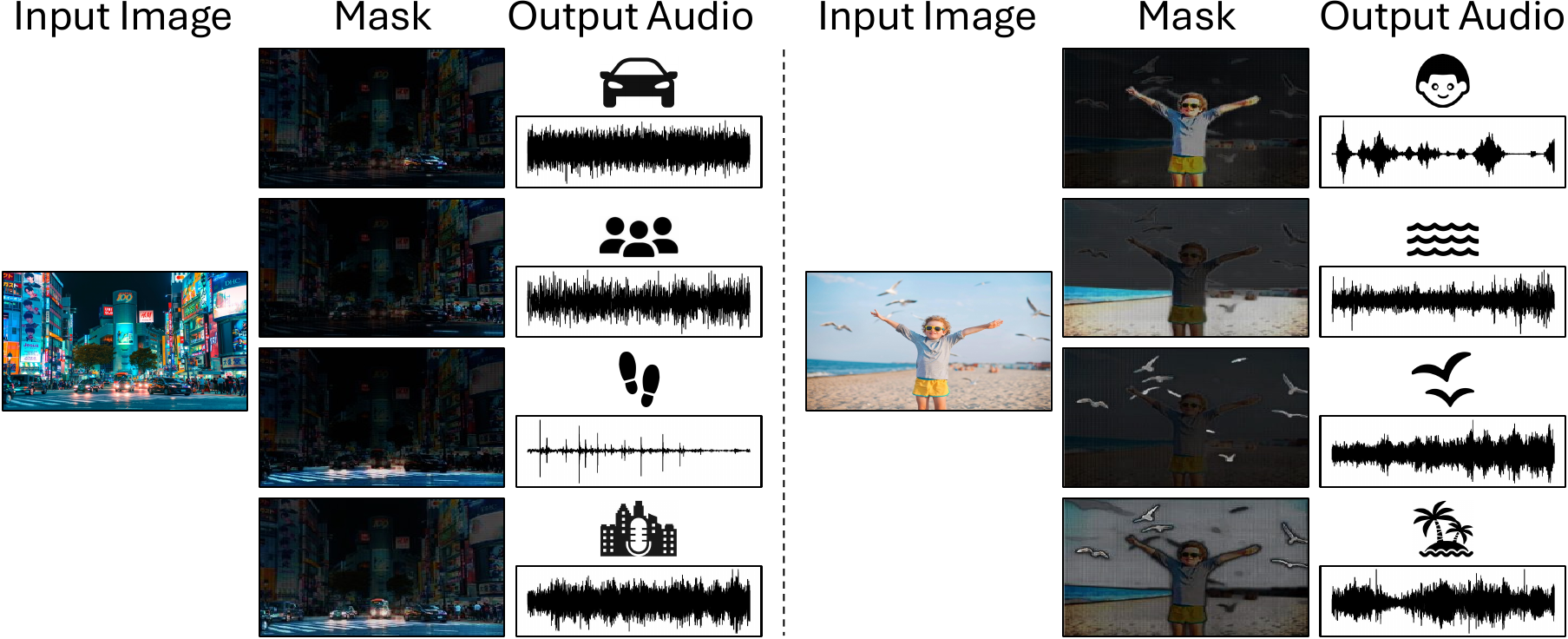}
    \caption{{\bf Interactive audio generation}. Our model generates object-specific sounds in the city (left) and beach (right) scenes, and composes a complete soundscape when one or more objects are selected.}
    \label{fig:one2many}
\end{figure}

\begin{figure}[t]
    \centering
    \includegraphics[width=\linewidth]{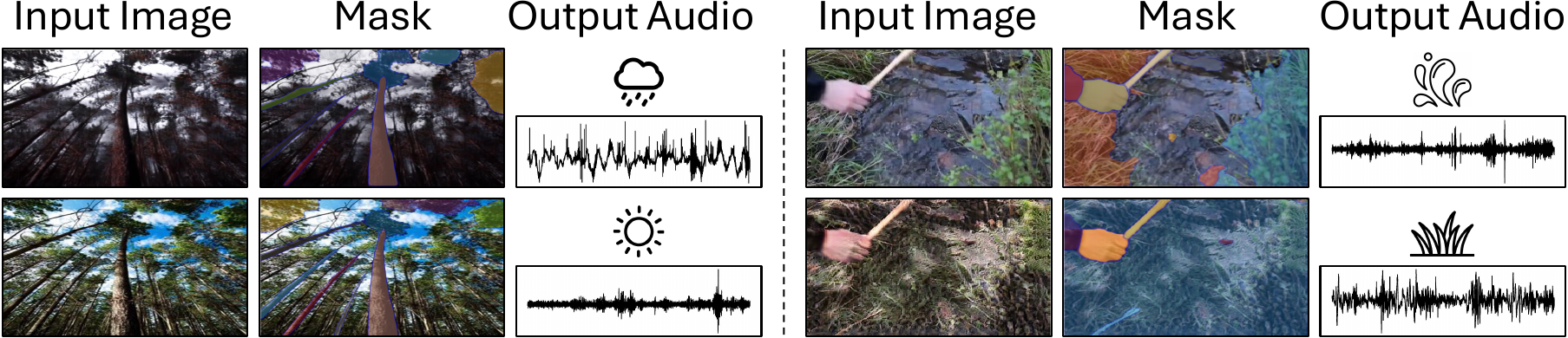}
    \caption{{\bf Generating soundscapes from visual texture changes.}. We generate different soundscapes by manipulating the visual textures of the same scene, such as changing weather (left) or materials (right).}
    \label{fig:application}
\end{figure}

\mypar{Interactive audio generation.}
We ask whether our model will generate object-specific sounds by isolating individual objects within a scene. As shown in Figure~\ref{fig:one2many}, we use the same image for each scene, separating different objects (cars, people, seagulls, etc.) to generate corresponding audio outputs. The results illustrate that our model successfully learns to generate distinct sounds for each object, such as car engines or footsteps, reflecting their unique sound textures. Furthermore, when multiple objects are selected together, our model is able to generate the entire soundscape that represents the scene property. This capability highlights our model's strength in interactively synthesizing audio.

\mypar{Sound adaptation to visual texture changes.}
\label{subsec:texture}
We explore whether our method can generate soundscapes that adapt to changes in visual textures, inspired by audio-visual video editing \cite{lee2023soundini}. Starting with images from the Places \cite{zhou2017places} and Greatest Hits \cite{owens2016visually} datasets, we apply an off-the-shelf image translation model \cite{park2020contrastive, li2022learning} to create paired scenes (e.g., sunny-rainy, water-grass), and then overlay full-image segmentation masks on top. As illustrated in Figure~\ref{fig:application}, our model generates context-appropriate soundscapes. For instance, it generates rain sounds for dark skies, wind sounds for clear skies, water splashing for watery surfaces, and grass crunching for grassy areas. This demonstrates that our model successfully captures variations in visual textures to generate corresponding audio.

\mypar{Balancing the volume of different objects.}
We find in Figure \ref{fig:one2many} that specifying each object separately tends to assign a similar volume to all sources. However, when multiple objects are selected, our method dynamically accounts for context. For example, if a large car dominates the scene, its siren may overwhelm subtle ambient sounds, creating a more realistic blend instead of flattening everything to equal volume. Moreover, we quantitatively confirm this context-driven behavior in Table \ref{tb:all_result}, \ref{tb:app_scene}, and \ref{tb:app_vggsound}, where our object-aware method better reflects how certain sources can overpower others or combine to create natural audio events.

\mypar{Interactions among multiple objects.}
We show in Figure \ref{fig:application} that our method captures interactions, like a stick splashing water, instead of generating only generic water flowing sounds. These results indicate our model’s ability to handle basic multi-object interactions from static images.

\section{Conclusion}
In this paper, we proposed an {\em interactive object-aware audio generation} model, focusing on aligning generated sounds with specific visual objects in complex scenes. To achieve this, we developed a diffusion model grounded in object-centric representations, enhancing the association between objects and their corresponding sounds via multi-modal attention. Theoretical analysis demonstrates that our object-grounding mechanism is functionally equivalent to segmentation masks. Quantitative and qualitative evaluations show that our model surpasses baselines in sound-object alignment, enabling cross-dataset generalization and user-controllable synthesis. We hope our work not only advances controllable audio generation but also inspires further exploration into the relationships between objects and sounds. We will release code and models upon acceptance.

\mypar{Limitations and broader impacts.}
Our model shows promising results in generating object-specific sounds from images but has certain limitations. First, relying on static images makes it challenging to produce non-stationary audio synchronized with dynamic events, such as impact sounds (Figure \ref{fig:application}). Second, it may lack precise control over the type of sound generated for similar objects, leading to ambiguity (e.g., a car might produce a siren or engine noise in Figure \ref{fig:quality}). Lastly, while useful for content creation like filmmaking, our model could be misused to generate misleading videos.

\section*{Acknowledgment}
We thank Ziyang Chen, Hao-Wen Dong, Yisi Liu, and Zhikang Dong for their helpful discussions, and the anonymous reviewers for their valuable feedback.

\section*{Impact Statement}
This paper introduces an {\em interactive object-aware audio generation} model. It is trained on a publicly available dataset, i.e., AudioSet, which does not contain personally identifiable information. We have taken steps to ensure compliance with data usage policies, and our model does not involve human subjects or raise privacy concerns. We believe our work poses minimal negative ethical impacts and societal implications, as it focuses on enhancing sound-object alignment in a controlled research environment. However, we encourage responsible use of our model, particularly when applied to real-world scenarios.





\bibliography{example_paper}
\bibliographystyle{icml2025}

\clearpage
\appendix

\renewcommand{\thesection}{\Alph{section}}
\setcounter{section}{0}

\addcontentsline{toc}{section}{Appendix}

\section{Results Video}
\label{app:re_video}
We provide a results video on the \href{https://tinglok.netlify.app/files/avobject/}{project webpage}, which showcases our model's ability to generate sounds based on the masked object prompts. Specifically, this video demonstrates the following:
\begin{itemize}[topsep=0pt, noitemsep, leftmargin=*]
    \item Our model can interactively generate object-specific sounds within complex scenes.
    \item Despite being trained on the AudioSet \citep{gemmeke2017audio}, our model can be successfully applied to out-of-domain visual scenes, including those from the Places dataset \citep{zhou2017places}, the Greatest Hits dataset \citep{owens2016visually}, and even random web images.
    \item Our model can capture variations in visual textures to generate corresponding audio.
    \item Our model can capture diverse objects within an image and generate sounds more accurately than the baselines.
\end{itemize}

\section{Dataset Preprocessing Details}
\label{app:data_prepro}

\subsection{Dataset Refinement}
\label{app:data_refine}
We use the AudioSet \citep{gemmeke2017audio} as the primary source for this task. The original dataset comprises 4,616 hours of video clips, each paired with corresponding labels and captions. Inspired by Sound-VECaps \citep{yuan2024improving}, we apply the following refinement steps to adapt the dataset for our use.

\paragraph{Audio-visual matching.}

To ensure strong correspondence between audio and visual inputs, we train an audio-visual matching model (Figure \ref{fig:avmatch}), which consists of a 6-layer non-causal transformer with a rotary positional embedding mechanism \citep{su2024roformer}. Visual embeddings are extracted using the ViT-B/16 Transformer module from CLIP \citep{radford2021learning}, while audio embeddings are generated using the BEATs model \citep{chen2022beats}. Both embeddings are then passed through a 3-layer MLP to match a 768-dimensional space. The model is trained in a self-supervised manner \citep{owens2018audio, korbar2018cooperative}, treating audio-visual pairs from the same temporal instance as matches and those from different videos as mismatches, which allows the model to learn audio-visual correspondences without human annotations.

\begin{figure}[t]
    \centering
    \includegraphics[width=0.8\linewidth]{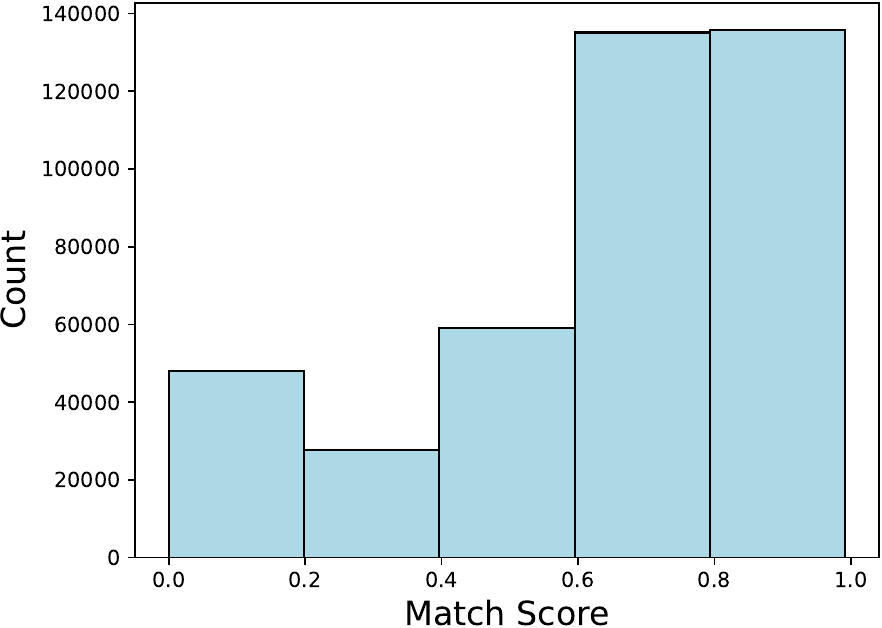}
    \caption{{\bf Distribution of matching scores.} We present the scores for audio-visual pairs in the AudioSet.}
    \label{fig:match_score}
\end{figure}

\begin{figure}[t]
    \centering
    \includegraphics[width=\linewidth]{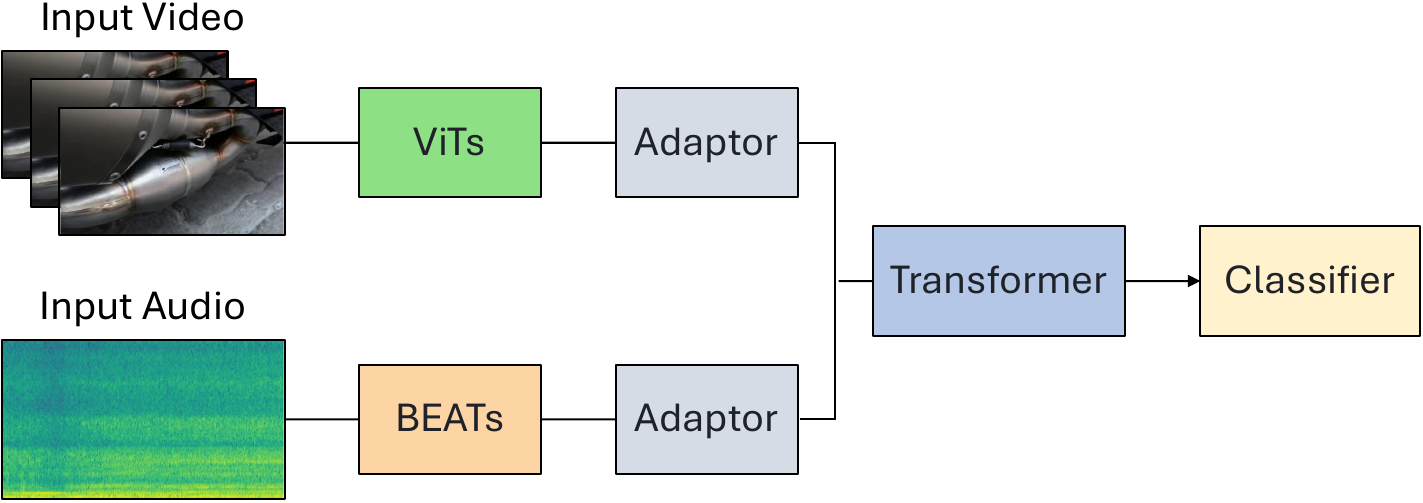}
    \caption{{\bf Model architecture of audio-visual matching.} We train a model to quantify the correspondence between a video and its corresponding soundtrack.}
    \label{fig:avmatch}
\end{figure}

For training efficiency, the videos are standardized to 8 frames per second, with each frame resized to 224x224 pixels. During the evaluation, our model achieves an accuracy of 91\% for matching scenarios and 85\% for non-matching scenarios on a set of 100 matched and 100 mismatched samples, indicating its effectiveness in capturing audio-visual alignment. We use this model to score each clip in the AudioSet, with results shown in Figure \ref{fig:match_score}. A threshold of 0.6 is then applied to filter the dataset.

\paragraph{Caption rephrasing.}
To ensure captions to focus exclusively on visible sounding objects, we utilize Llama \citep{touvron2023llama} with a tailored prompt (Figure~\ref{fig:prompt}). Given the video and audio captions, our prompt instructs the model to generate a single sentence highlighting the common features between the audio and visual content. The prompt emphasizes including only events present in both modalities, while excluding modality-specific details such as overly specific visual features. The model is guided to capture the order and parallel occurrence of events using temporal markers like ``and then'', ``followed by'', and ``while''. This process enhances the consistency between audio and visual descriptions.

\begin{figure*}[t]
    \centering
    \includegraphics[width=\linewidth]{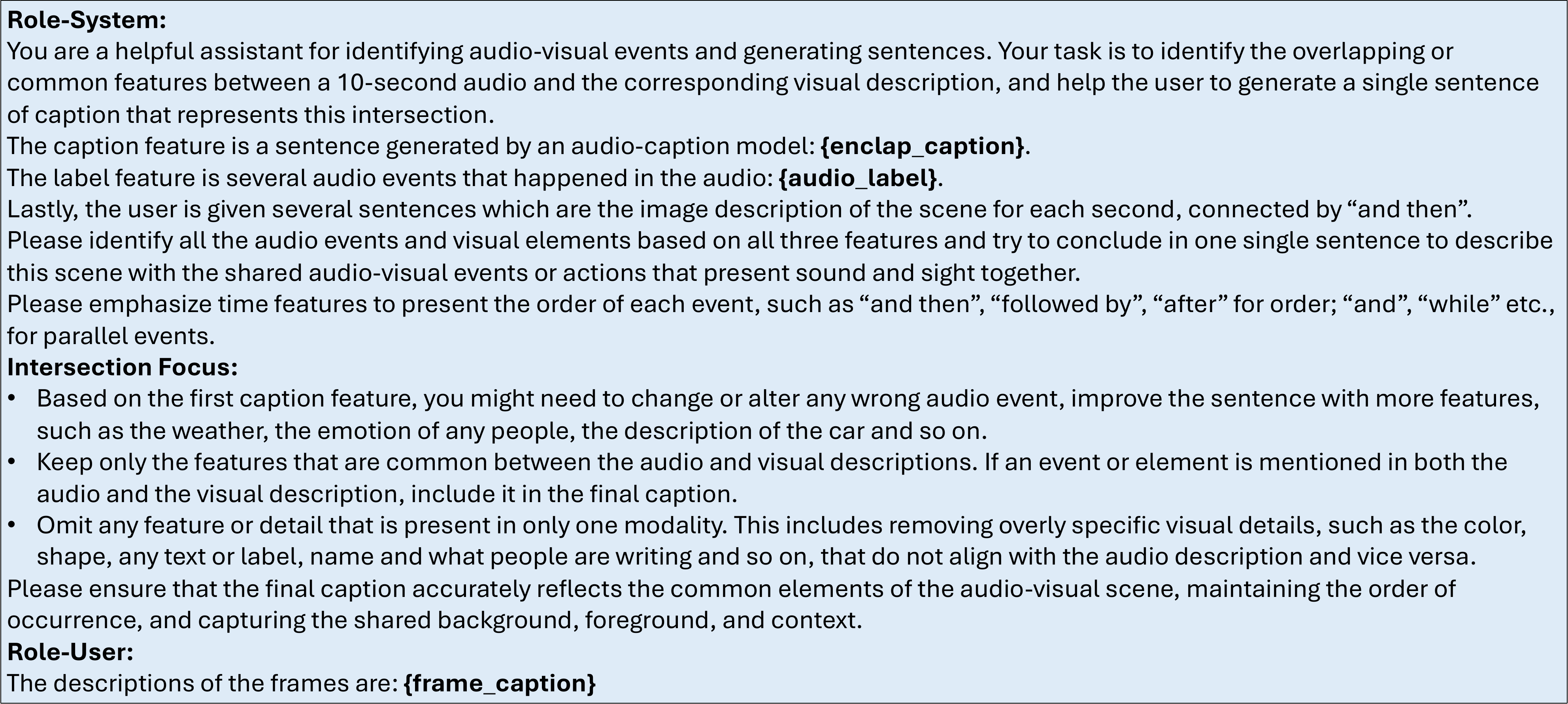}
    \caption{{\bf Prompt for Llama}. We extract common features between the audio and visual caption using Llama, ensuring the resulting caption focuses on events present in both modalities while avoiding overly specific details.}
    \label{fig:prompt}
\end{figure*}

\paragraph{Audio filtering.}
We filter out clips containing human vocalizations (e.g., singing, talking), voiceovers, and music using a sound event detection model \citep{kong2020panns} and the metadata from AudioSet. This step ensures that the remaining audio data largely consists of ambient and context-specific sounds that are more likely to align with the visual content.

After applying these refinement steps, the dataset is reduced to 748 hours of video clips that most likely contain continuous sounds throughout each clip and exhibit high audio-visual correspondence. 

\begin{figure}[t]
    \centering
    \includegraphics[width=0.82\linewidth]{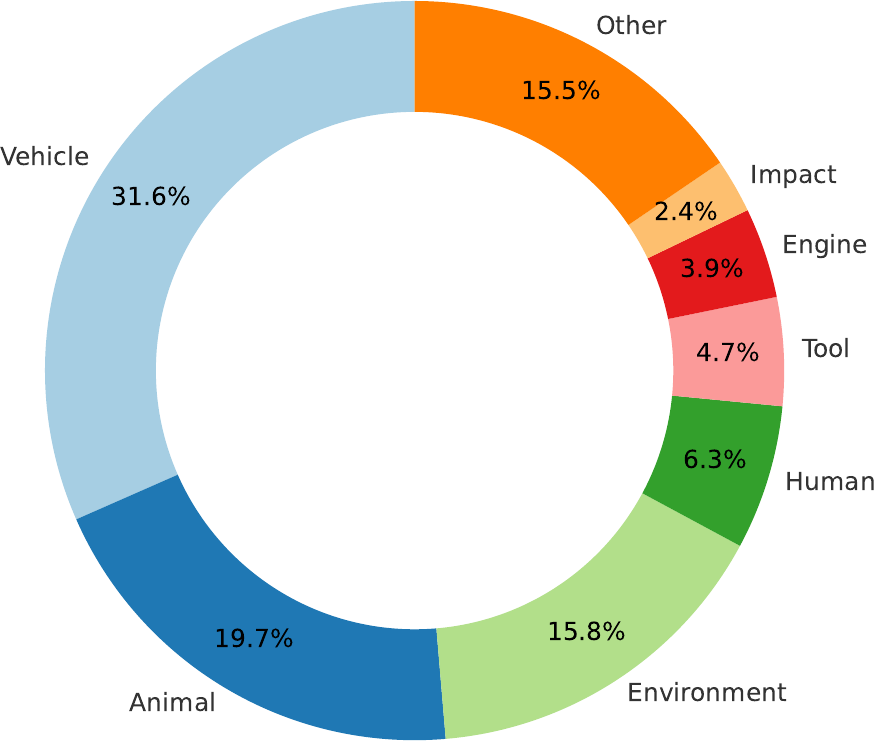}
    \caption{{\bf Categorical distribution of the filtered AudioSet.} We show top 8 categories derived from AudioSet annotations.}
    \label{fig:categories}
\end{figure}

\subsection{Dataset Configuration}
Figure~\ref{fig:categories} shows the top-8 categorical distributions, derived from AudioSet \cite{gemmeke2017audio} annotations. We uniformly sample 48 hours across these categories for the test set, with the remaining used for training. Notably, there is no overlap between training and testing videos. As most clips contain multiple sound sources, we randomly select 100 examples from the test set to assess our model's ability to generate object-specific sounds through human evaluation. For 50 of these samples, we manually create object masks by splitting each caption into object snippets and then randomly selecting one to guide SAM in generating the mask.

\section{Additional Evaluation Details}
\label{app:eval}
\paragraph{ACC.}
We use the PANNs model\footnote{\tt \url{https://github.com/qiuqiangkong/audioset_tagging_cnn}} \citep{kong2020panns} to compute ACC for each audio clip, leveraging annotations from AudioSet. Each audio clip is first processed through the pre-trained PANNs model to obtain logit values for all sound event classes, excluding classes like ``Speech" and ``Music." We then sample the logits for each clip based on its annotated labels in AudioSet. Since these logits are softmax outputs, they represent the model's confidence in each sound event, allowing us to interpret them as accuracy scores. Finally, we compute the mean of these sampled logits across all clips to determine the overall ACC score.

\paragraph{FAD, KL, and IS.}
We measure FAD, KL, and IS using the AudioLDM-Eval toolbox\footnote{\tt \url{https://github.com/haoheliu/audioldm_eval}}. The reference and generated audio files are organized into separate folders, and the toolbox is run in paired mode.

\paragraph{AVC.}
We measure AVC using a two-stream network \cite{arandjelovic2017look}. One stream extracts audio features, while the other extracts visual features. We use OpenL3\footnote{\tt \url{https://github.com/marl/openl3}} \cite{cramer2019look} to obtain these features and compute the cosine similarity for each image-audio pair. Specifically, we employ the “env” content type model with a 512-dimensional linear spectrogram representation.

\begin{figure*}[t]
\centering
\includegraphics[width=\linewidth]{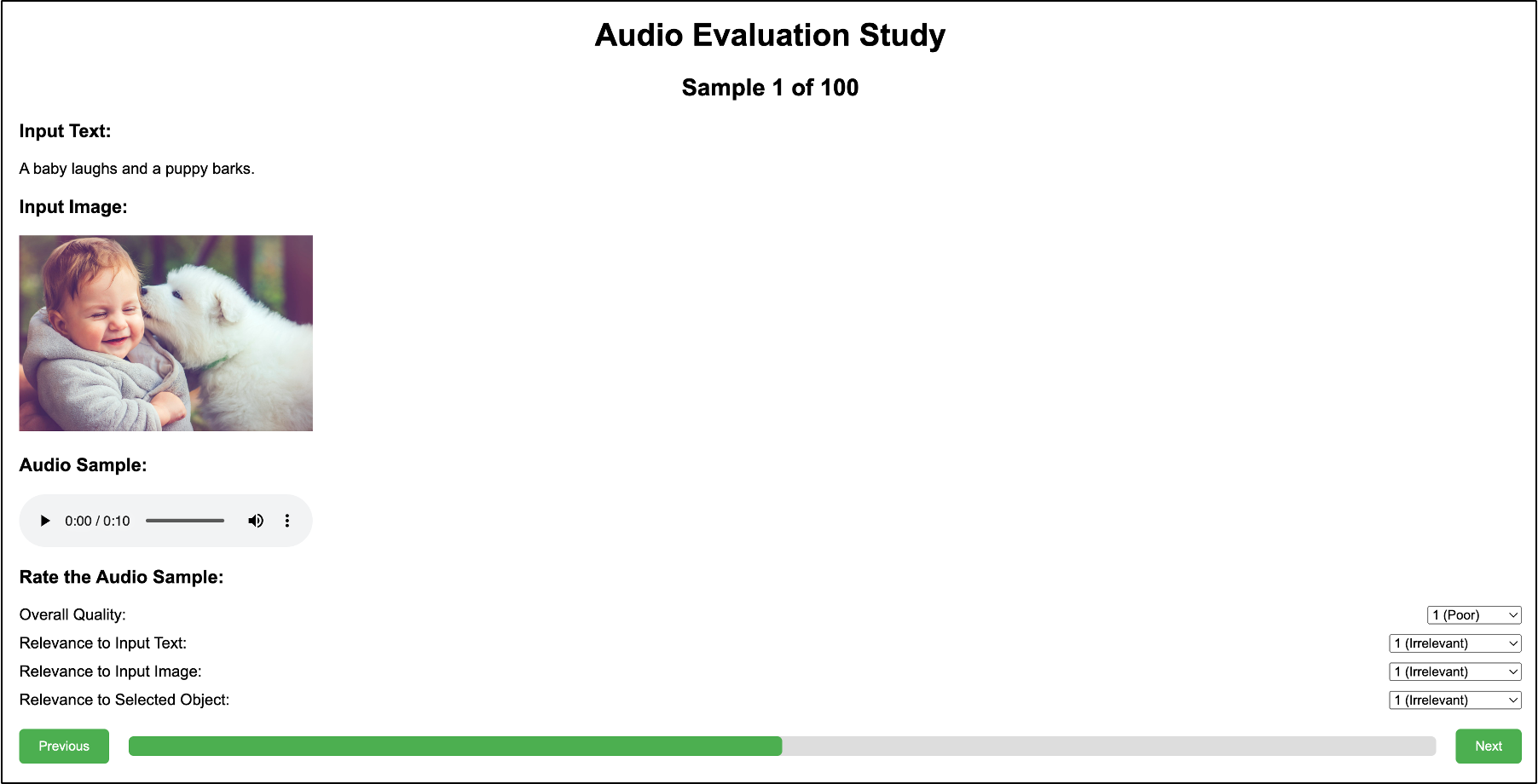}
\caption{{\bf Human evaluation interface.} We show the interface used for the subjective evaluation of generated audio samples. Participants are presented with input text, an image, and a corresponding audio sample, and are instructed to rate the audio on four criteria. All ratings must be completed before advancing to the next sample.}
\label{fig:mturk}
\end{figure*}

\paragraph{Human evaluation.}
\label{app:human_eval}
We conducted a human evaluation to assess the quality and relevance of the generated audio using Amazon Mechanical Turk\footnote{\tt \url{https://www.mturk.com/}}. The interface for this study is shown in Figure~\ref{fig:mturk}. Each participant was presented with an input image and the corresponding generated audio, then rated each sample on a scale from 1 to 5 based on the following criteria: (\romannumeral1) Overall Quality (OVL), assessing the general audio quality; (\romannumeral2) Relevance to Input Text (RET), measuring the alignment of the audio with the associated text description; (\romannumeral3) Relevance to Input Image (REI), evaluating how well the audio corresponds to the visual content; and (\romannumeral4) Relevance to Selected Object (REO), focusing on the alignment of the audio with a specific object in the image.

We randomly selected 100 samples for evaluation, each rated by 50 unique participants to ensure reliability. These samples included both (50\%) holistic and (50\%) object-specific cases. To control for random responses, we incorporated a set of noise-only samples. Consistently low scores for these control samples confirmed the reliability of participants. Additionally, we ensured that each participant spent at least 90 seconds evaluating each sample to guarantee thoughtful assessment.

To further validate our results, we computed the inter-rater reliability using Cohen's kappa \citep{mchugh2012interrater}, which indicated a substantial agreement among raters ($\kappa = 0.78$). Furthermore, we conducted a statistical significance test (paired t-test) \citep{kim2015t} between our model and baselines for each criterion, confirming that the improvements reported are statistically significant ($p < 0.01$). The final scores presented in the main paper are the mean ratings across all participants.

\section{Additional Results}
\label{app:add_results}
\paragraph{Different CFG scales.}
\begin{table}[t]
\scriptsize
\centering
\begin{tabular}{lccccc}
\toprule
{\bf Scale} & ~{\bf ACC} ($\uparrow$)~ & ~{\bf FAD} ($\downarrow$)~ & ~{\bf KL} ($\downarrow$)~  & ~{\bf IS} ($\uparrow$)~ & ~{\bf AVC} ($\uparrow$)\\
\midrule
$\lambda=1.0$ & 0.413 & 2.021 & 0.914 & 1.336 & 0.674 \\
$\lambda=1.5$ & 0.657 & 1.558 & 0.762 & 1.617 & 0.751 \\
$\lambda=2.0$ & {\bf 0.859} & {\bf 1.271} & {\bf 0.517} & {\bf 2.102} & {\bf 0.891} \\
$\lambda=2.5$ & 0.807 & 1.440 & 0.589 & 2.012 & 0.853 \\
$\lambda=3.0$ & 0.796 & 1.482 & 0.576 & 2.023 & 0.841 \\
\bottomrule
\end{tabular}
\caption{Quantitative results under different CFG scales.}
\label{tb:app_cfg_scale}
\end{table}
We evaluate our model's performance across CFG scales ranging from 1.0 to 3.0. As shown in Table~\ref{tb:app_cfg_scale}, there is a consistent improvement in metrics as $\lambda$ increases from 1.0 to 2.0, reaching peak performance at $\lambda = 2.0$. However, further increasing $\lambda$ beyond 2.0 results in a gradual decline across most metrics.

\begin{table}[t]
\scriptsize
\centering
\begin{tabular}{lccccc}
\toprule
{\bf Threshold} & ~{\bf ACC} ($\uparrow$)~ & ~{\bf FAD} ($\downarrow$)~ & ~{\bf KL} ($\downarrow$)~  & ~{\bf IS} ($\uparrow$)~ & ~{\bf AVC} ($\uparrow$)\\
\midrule
0.4 & 0.521 & 1.874 & 0.888 & 1.432 & 0.696 \\
0.5 & 0.743 & 1.536 & 0.691 & 1.625 & 0.774 \\
0.6 & {\bf 0.859} & {\bf 1.271} & {\bf 0.517} & {\bf 2.102} & {\bf 0.891} \\
0.7 & 0.845 & 1.387 & 0.612 & 1.987 & 0.882 \\
0.8 & 0.812 & 1.501 & 0.664 & 2.005 & 0.879 \\
\bottomrule
\end{tabular}
\caption{Quantitative results under different audio-visual matching scores.}
\label{tb:app_match}
\end{table}

\begin{table}[t]
\scriptsize
\centering
\setlength{\tabcolsep}{2.07mm}{
\begin{tabular}{lccccc}
\toprule
{\bf Method} & ~{\bf ACC} ($\uparrow$)~ & ~{\bf FAD} ($\downarrow$)~ & ~{\bf KL} ($\downarrow$)~  & ~{\bf IS} ($\uparrow$)~ & ~{\bf AVC} ($\uparrow$)\\
\midrule
w/o PE & 0.787 & 1.493 & 0.674 & 1.913 & 0.779 \\
w/ PE (Ours) & {\bf 0.859} & {\bf 1.271} & {\bf 0.517} & {\bf 2.102} & {\bf 0.891} \\
\bottomrule
\end{tabular}}
\caption{Model performance comparison with and without positional encoding.}
\label{tb:app_pe}
\end{table}

\paragraph{Different thresholds of audio-visual matching.}
We test our model's performance across different audio-visual matching thresholds, varying from 0.4 to 0.8 (Figure \ref{fig:match_score}). The same held-out test set is used to assess the metrics, with results presented in Table~\ref{tb:app_match}. We empirically find that the model achieves optimal performance at a threshold of 0.6.

\paragraph{Effect of positional encoding.}
We assess the impact of positional encoding (PE) on our model's performance. As shown in Table~\ref{tb:app_pe}, removing positional encoding leads to a significant degradation across all metrics, highlighting its importance in the model's overall performance.

\paragraph{Impact of overall scene context.}
We examine whether capturing the overall scene context benefits audio generation. To this end, we compare the Captioning \& Mix baseline, where each detected object in the image is captioned separately, passed to AudioLDM to generate individual audio clips, and subsequently mixed, against the Captioning baseline (as described in Section~\ref{subsec:baseline}) that leverages the full scene. As shown in Table~\ref{tb:app_scene}, although Captioning \& Mix yields more accurate audio events (ACC), the perceptual metrics (FAD, KL, IS, and AVC) consistently favor the full-scene Captioning method. These results suggest that context awareness is crucial for generating high-quality audio.

\begin{table}[t]
\scriptsize
\centering
\setlength{\tabcolsep}{1.63mm}{
\begin{tabular}{lccccc}
\toprule
{\bf Method} & ~{\bf ACC} ($\uparrow$)~ & ~{\bf FAD} ($\downarrow$)~ & ~{\bf KL} ($\downarrow$)~  & ~{\bf IS} ($\uparrow$)~ & ~{\bf AVC} ($\uparrow$)\\
\midrule
Captioning \& Mix & {\bf 0.643} & 7.634 & 2.511 & 1.443 & 0.645 \\
Captioning & 0.587 & {\bf 2.778} & {\bf 1.364} & {\bf 1.901} & {\bf 0.773} \\
\bottomrule
\end{tabular}}
\caption{Model performance comparison with and without mixing strategy.}
\label{tb:app_scene}
\end{table}

\paragraph{Choice of segmentation module.}
We replace SAM with SAM 2 \cite{ravi2024sam}, a more sophisticated segmentation method, and evaluate it on the test set. We show in Table \ref{tb:app_sam} that this substitution leads to further gains in generation accuracy and quality, which confirms that more precise segmentation masks benefit our method and aligns well with Theorem \ref{thm:test-err}.

\begin{table}[t]
\scriptsize
  \centering
  \begin{tabular}{lccccc}
    \toprule
    \textbf{Method} & \textbf{ACC (↑)} & \textbf{FAD (↓)} & \textbf{KL (↓)} & \textbf{IS (↑)} & \textbf{AVC (↑)} \\
    \midrule
    w/ SAM   & 0.859 & 1.271 & 0.517 & 2.102 & 0.891 \\
    w/ SAM 2 & {\bf 0.881} & {\bf 1.153} & {\bf 0.472} & {\bf 2.295} & {\bf 0.936} \\
    \bottomrule
  \end{tabular}
  \caption{Evaluation comparison between SAM and SAM 2 modules.}
  \label{tb:app_sam}
\end{table}

\paragraph{Synchformer-based metric.}
Inspired by Synchformer’s contrastive pre-training \cite{iashin2024synchformer}, we employ ImageBind \cite{girdhar2023imagebind} to measure audio-visual matching on static images. By extracting features from both modalities and computing cosine similarity, we show in Table \ref{tab:app_ib} that our method consistently outperforms baselines on this metric.

\begin{table}[t]
\scriptsize
  \centering
  \begin{tabular}{lc}
    \toprule
    \textbf{Method}                  & \textbf{IB ($\uparrow$)} \\
    \midrule
    Ground Truth                     & 0.66            \\
    \midrule
    Retrieve \& Separate \cite{zhao2018sound}            & 0.29            \\
    AudioLDM 1 \cite{liu2023audioldm}                      & 0.24            \\
    AudioLDM 2 \cite{liu2024audioldm}                      & 0.27            \\
    Captioning \cite{li2022blip}                      & 0.31            \\
    Make-an-Audio \cite{huang2023make}                   & 0.19            \\
    Im2Wav \cite{sheffer2023hear}                          & 0.33            \\
    SpecVQGAN \cite{iashin2021taming}                       & 0.37            \\
    Diff-Foley \cite{luo2023diff}                      & 0.39            \\
    \midrule
    Ours                             & {\bf 0.45}            \\
    \bottomrule
  \end{tabular}
  \caption{Comparison of ImageBind (IB) scores across different methods.}
  \label{tab:app_ib}
\end{table}

\section{Additional Dataset Evaluations}
\label{app:add_eval}
\paragraph{VGG-Sound dataset.}
To further evaluate our method, we evaluate it on the VGG-Sound dataset \cite{chen2020vggsound}, which contains in-the-wild audio-visual data collected from YouTube. Following \cite{chen2021audio}, we use VGG-Sound Sync, a 14-hour subset that contains about 5,000 video clips with better audio-visual synchronization, for testing. To obtain captions aligned with this dataset, we apply the same refinement procedure described in Appendix~\ref{app:data_refine}. We assess our model and all baselines (trained on AudioSet) using the VGG-Sound Sync dataset. As presented in Table~\ref{tb:app_vggsound}, our method outperforms all baselines across multiple metrics, particularly in ACC. These results indicate that our model generates more accurate audio that captures the complexity of each scene, while preserving audio quality.

\begin{table*}[t]
\scriptsize
\centering
\begin{tabular}{l|ccccc}
\toprule
\textbf{Method} & \textbf{ACC} ($\uparrow$) & \textbf{FAD} ($\downarrow$) & \textbf{KL} ($\downarrow$) & \textbf{IS} ($\uparrow$) & \textbf{AVC} ($\uparrow$) \\
\midrule
Ground Truth & / & / & / & / & 0.986 \\
\midrule
Retrieve \& Separate \cite{zhao2018sound}
& 0.143 & 4.731 & 1.726 & 1.782 & 0.713 \\
AudioLDM 1 \cite{liu2023audioldm}
& 0.256 & 3.876 & 1.634 & 1.901 & 0.634 \\
AudioLDM 2 \cite{liu2024audioldm}
& 0.401 & 3.114 & 1.137 & 1.915 & 0.687 \\
Captioning \cite{li2022blip}
& 0.491 & 2.378 & 1.089 & 2.001 & 0.763 \\
Make-an-Audio \cite{huang2023make}
& 0.395 & 3.436 & 1.571 & 1.876 & 0.721 \\
Im2Wav \cite{sheffer2023hear}
& 0.412 & 3.005 & 1.474 & 1.894 & 0.747 \\
SpecVQGAN \cite{iashin2021taming}
& 0.544 & 2.722 & 1.015 & 1.916 & 0.796 \\
Diff-Foley \cite{luo2023diff}
& 0.607 & 1.834 & 0.941 & 2.161 & 0.851 \\
\midrule
Ours
& \textbf{0.761} & \textbf{1.112} & \textbf{0.675} & \textbf{2.342} & \textbf{0.898} \\
\bottomrule
\end{tabular}
\caption{Additional quantitative comparison of our method and baselines on the VGG-Sound Sync dataset.}
\label{tb:app_vggsound}
\end{table*}

\paragraph{ImageHear dataset.}
We also evaluate our method on the ImageHear dataset \cite{sheffer2023hear}, an image-to-audio benchmark comprising 100 web-sourced images spanning 30 visual categories (2–8 images per class). Although each image contains only a single object, which does not align well with our object-aware setting, our method continues to outperform all baselines in both clip-score (CS) and ACC, as reported in Table \ref{tb:app_imghear}.

\begin{table}[t]
\scriptsize
  \centering
  \begin{tabular}{lcc}
    \toprule
    \textbf{Method}       & \textbf{CS (↑)} & \textbf{ACC (↑)} \\
    \midrule
    Make-an-Audio \cite{huang2023make}         & 27.44           & 0.77             \\
    Im2Wav \cite{sheffer2023hear}              & 9.53            & 0.49             \\
    SpecVQGAN \cite{iashin2021taming}           & 18.98           & 0.49             \\
    Diff-Foley \cite{luo2023diff}           & 35.12           & 0.86             \\
    \midrule
    Ours                  & 47.37           & 0.88             \\
    \bottomrule
  \end{tabular}
  \caption{Additional comparison of our method and baselines on the ImageHear dataset.}
  \label{tb:app_imghear}
\end{table}
\section{Proof of Theorem 3.1}
\label{sec:proof}  

\begin{proof}
For notation simplicity, let $u_q \in \Delta^P$ denote the softmax attention weight computed on query $q$ such that $u_{q,l} = \frac{\exp\left(\langle \cE_v(\bm{t}_q),\cE_t(\bm{i}_{q,l})\rangle_{\Sigma}\right)}{\sum_{k=1}^P\exp\left(\langle \cE_v(\bm{t}_q),\cE_t(\bm{i}_{q,k})\rangle_{\Sigma}\right)}$. 
We first state the following lemma.
\begin{lemma}\label{lem:clip-loss}
Under the same conditions in Theorem 3.1 of the main paper, we have
\begin{align*}
    \E_q[\|u_q - p_q\|_{\ell_1}] \leq \sqrt{2\epsilon_\clip}
\end{align*}
\end{lemma}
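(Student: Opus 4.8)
The plan is to first recognize that $\epsilon_\clip$ is exactly the expected Kullback--Leibler divergence between the ground-truth mask $p_q$ and the softmax attention weight $u_q$, and then to convert this divergence bound into the desired $\ell_1$ bound via Pinsker's inequality. Concretely, expanding the definition of $\epsilon_\clip$ and observing that the inner softmax ratio is precisely $u_{q,d}$ (since the similarity $\langle\cdot,\cdot\rangle_\Sigma$ inside that softmax coincides with the one defining $u_q$), I would write
\begin{align*}
\epsilon_\clip = \bE_{q,\, d \sim p_q}[-\log u_{q,d}] - \bE_{q,\, d \sim p_q}[-\log p_{q,d}] = \bE_q\!\left[\sum_{d=1}^P p_{q,d}\log\frac{p_{q,d}}{u_{q,d}}\right] = \bE_q[\KL(p_q \,\|\, u_q)].
\end{align*}
This identification is the conceptual crux of the argument: once $\epsilon_\clip$ is seen as an averaged KL divergence, the remaining steps are mechanical.

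Next, for each fixed query $q$, I would apply Pinsker's inequality to the pair of probability vectors $p_q, u_q \in \Delta^P$. Pinsker gives $\|p_q - u_q\|_{\mathrm{TV}} \leq \sqrt{\tfrac{1}{2}\KL(p_q \,\|\, u_q)}$, and since the $\ell_1$ distance between two probability vectors equals twice their total variation distance, this yields the pointwise bound $\|u_q - p_q\|_{\ell_1} \leq \sqrt{2\,\KL(p_q \,\|\, u_q)}$.

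Finally, I would take the expectation over the test query $q$ and pull it inside the square root using Jensen's inequality, exploiting the concavity of $\sqrt{\cdot}$:
\begin{align*}
\E_q[\|u_q - p_q\|_{\ell_1}] \leq \E_q\!\left[\sqrt{2\,\KL(p_q \,\|\, u_q)}\right] \leq \sqrt{2\,\E_q[\KL(p_q \,\|\, u_q)]} = \sqrt{2\epsilon_\clip},
\end{align*}
which is exactly the claimed inequality. The only step demanding genuine care is the first one: verifying that the subtracted baseline term $-\bE_{q,d\sim p_q}[\log p_{q,d}]$ is precisely the cross-entropy-completing term that turns the InfoNCE-type loss into a clean KL divergence, and checking that the scaling of the exponents matches so that the softmax in $\epsilon_\clip$ reproduces $u_q$ rather than some rescaled variant. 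Everything after that identification is a textbook Pinsker-plus-Jensen argument with no hidden subtleties.
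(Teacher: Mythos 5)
Your proposal is correct and follows essentially the same route as the paper: both identify $\epsilon_\clip$ as the expected KL divergence $\bE_q[\KL(p_q\|u_q)]$, apply Pinsker's inequality pointwise, and then exchange expectation and square root (your Jensen step on $\sqrt{\cdot}$ is the same inequality the paper phrases as Cauchy--Schwarz, $\bE[X] \leq \sqrt{\bE[X^2]}$). No substantive difference.
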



\begin{proof}
Notice that
\begin{align*}
    &~ \epsilon_{\clip}\\
    = &~ \bE_{q,d \sim p_q}\left[-\log\frac{\exp\left(\langle \cE_v(\bm{t}_q),\cE_t(\bm{i}_{q,d})\rangle_{\Sigma}\right)}{\sum_{k=1}^P\exp\left(\langle \cE_v(\bm{t}_q),\cE_t(\bm{i}_{q,k})\rangle_{\Sigma}\right)}\right] \\
    &~ - \bE_{q,d \sim p_q}\left[-\log p_{q,d}\right]\\
    = &~ \bE_{q,d \sim p_q}\left[\log \frac{p_{q,d}}{u_{q,d}}\right]\\
    = &~ \bE_{q}\left[\KL({p_{q,d}}\|{u_{q,d}})\right]
\end{align*}
where $\KL$ denotes the KL distance. By Pinsker's inequality and Cauchy-Schwarz inequality, 
\begin{align*}
    \epsilon_{\clip} = &~ \bE_{q}\left[\KL({p_{q,d}}\|{u_{q,d}})\right]\\
    \geq &~ \frac{1}{2}\cdot \bE_{q}\left[\|p_{q,d}-u_{q,d}\|_{\ell_1}^2\right]\\
    \geq &~ \frac{1}{2}\cdot \left(\bE_{q}\left[\|p_{q,d}-u_{q,d}\|_{\ell_1}\right]\right)^2.
\end{align*}
It follows that
\begin{align*}
    \E_q[\|u_q - p_q\|_{\ell_1}] \leq \sqrt{2\epsilon_\clip}.
\end{align*}
\end{proof}

Returning to the proof of Theorem 3.1 in the main paper, let $s_q := f(a_q) = f(u_q\bV)$ denote the audio output on query $q$ by the trained model. We decompose $\err$ by
\begin{align*}
    &~ \err \\
    = &~ \underbrace{\E_q[v(f^*(p_q\bm{V}^*),\bm{i}_q,p_q)] - \E_q[v(f^*(u_q\bm{V}^*),\bm{i}_q,p_q)]}_{A}\\
    &~ + \underbrace{\E_q[v(f^*(u_q\bm{V}^*),\bm{i}_q,p_q)] - \E_q[v(f^*(a_q),\bm{i}_q,p_q)]}_{B}\\
    &~ + \underbrace{\E_q[v(f^*(a_q),\bm{i}_q,p_q)] - \E_q[v(f(a_q),\bm{i}_q,p_q)]}_{C}\\
    &~ + \underbrace{\E_q[v(f(a_q),\bm{i}_q,p_q)] - \E_q[v(f(a_q),\bm{i}_q,\bm{m}_q)]}_{D}\\
    &~ + \underbrace{\E_q[v(f(a_q),\bm{i}_q,\bm{m}_q)] - \E_q[v(f(\bm{m}_q\bV),\bm{i}_q,\bm{m}_q)]}_{E}.
\end{align*}
By Lemma~\ref{lem:clip-loss} and $\|\bV^*\|_\infty \leq B_v$, we have
\begin{align*}
    A \leq &~ \E_q[L_v \cdot L_{f} \cdot B_v \cdot \|u_q - p_q\|_{\ell_1}]\\
    \leq &~ L_v \cdot L_{f} \cdot B_v \cdot \sqrt{2\epsilon_\clip}.
\end{align*}
Since $\|\bV^*-\bV\|_\infty \leq \epsilon_{\bV}$ and $\|u_q\|_1 = 1$, we have
\begin{align*}
    B = &~ \E_q[v(f^*(u_q\bV^*),\bm{i}_q,p_q)] - \E_q[v(f^*(u_q\bV),\bm{i}_q,p_q)]\\
    \leq &~ L_v \cdot L_{f} \cdot \epsilon_{\bV}.
\end{align*}
By definition, $C \leq \epsilon_f$. 
Using the definition $\epsilon_\sam = \E_q[\|\bm{m}_q - p_q\|_{\ell_1}]$, we have
\begin{align*}
    D \leq &~ \E_q[L_v \cdot \|\bm{m}_q - p_q\|_{\ell_1}]\\
    \leq &~ L_v \cdot \epsilon_{\sam}.
\end{align*}
and using $\|\bV\|_\infty\leq B_v$ with Lemma~\ref{lem:clip-loss},
\begin{align*}
    E \leq &~ \E_q[L_v \cdot L_f \cdot B_v \cdot \|\bm{m}_q - u_q\|_{\ell_1}]\\
    \leq &~ L_v \cdot L_f \cdot B_v \cdot (\epsilon_{\sam}+\sqrt{2\epsilon_\clip}).
\end{align*}
Combining, we have
\begin{align*}
    \err \leq &~ L_v \cdot L_{f} \cdot \Bigl( \epsilon_{\bV} + B_v \cdot \bigl( \epsilon_{\sam} + 2\sqrt{2\epsilon_\clip} \bigr) \Bigr) \\
    &~ + L_v \cdot \epsilon_{\sam} + \epsilon_f.
\end{align*}
This completes the proof.
\end{proof}


\end{document}